\documentclass[11pt,english,reqno]{article}

\usepackage{graphicx}
\usepackage{subcaption}
\usepackage{amssymb}
\usepackage{amsmath}
\usepackage{amsthm}
\usepackage{xcolor}
\usepackage{enumitem}
\usepackage[margin=1.3in]{geometry}
\usepackage{tocloft}
\usepackage[hidelinks]{hyperref}
\usepackage[textwidth=3cm]{todonotes}
\usepackage{comment}
\usepackage{yhmath}
\usepackage[T1]{fontenc}

\DeclareMathOperator*{\diag}{diag}
\DeclareMathOperator{\Tr}{Tr}

\DeclareMathOperator{\Id}{Id}

\DeclareMathOperator{\im}{im}

\DeclareMathOperator{\Sym}{Sym}

\DeclareMathAlphabet{\mathbbb}{U}{bbold}{m}{n}

\newcommand{\diff}{\mathrm{d}}

\newcommand{\E}{\mathbb{E}}
\newcommand{\R}{\mathbb{R}}

\renewcommand{\leq}{\leqslant}
\renewcommand{\geq}{\geqslant}

\theoremstyle{plain}
\newtheorem{proposition}{Proposition}[section]

\newtheorem{lem}[proposition]{Lemma}
\newtheorem{coro}[proposition]{Corollary}
\newtheorem{result}[proposition]{Result}

\theoremstyle{definition}

\theoremstyle{remark}

\newtheorem{remark}[proposition]{Remark}

\numberwithin{equation}{section}

\author{}

\begin{document}

\title{The Multiple Timescales of Gradient Descent on~the~Edge~of~Stability: A~Perturbative~Derivation~of~the~Central~Flow}

\author{Raphaël Berthier\\
\small
Sorbonne Université, Inria, Centre Inria de Sorbonne Université, Paris, France}

\maketitle

\begin{abstract}
    The central flow of Cohen et al.~(2025) is an empirically accurate continuous-time model of gradient descent at the edge of stability in deep learning, However, its derivation is heuristic. We propose a perturbative regime in which the central flow is the limit of gradient descent: we assume that the loss decomposes as $f = g + \varepsilon h$; in the limit $\varepsilon \to 0$, the dynamics of gradient descent with learning rate $\eta$ converge to the gradient flow of $h$ constrained to the minimizers of $g$ of sharpness at most $2/\eta$. Our approach is formal rather than rigorous; it treats gradient descent as a singularly perturbed dynamical system in $\varepsilon$. Three timescales emerge: a fast timescale of oscillations along the sharpest direction, an intermediate timescale of the self-stabilization mechanism, and a slow timescale of the dynamics along the minimizers of $g$---the central flow. Using the method of multiple scales, a classical formal method from singular perturbation theory, we derive the expansion of the dynamics in $\varepsilon$: the central flow emerges as the leading-order term in the expansion, while the self-stabilization mechanism appears in the next-order term. We study this mechanism beyond previous analyses: with a single eigenvalue at the edge of stability, we compute the slow drift of the energy of the fluctuations; with several eigenvalues at the edge of stability, we derive the self-stabilization system and explain why fluctuations persist.
\end{abstract}

\noindent

\tableofcontents

\section{Introduction}

\paragraph{Context.} Gradient descent in deep learning still lacks theoretical understanding. Recent works have pointed out a surprising phenomenology. A large part of the training occurs at the so-called \emph{edge of stability} \cite{jastrzkebski2018relation,jastrzebski2020break,cohen2021gradient}. In this regime, the \emph{sharpness} of the loss---the maximal eigenvalue of the Hessian---fluctuates around the value $2/\eta$, where $\eta$ is the learning rate. Above this $2/\eta$ threshold, gradient descent on the local quadratic approximation of the loss develops diverging oscillations along the sharpest direction of the loss; it is thus remarkable that gradient descent on the loss itself does not diverge when fluctuations increase the sharpness above $2/\eta$. This is due to a so-called \emph{self-stabilization} phenomenon~\cite{damian2022self}: 
the oscillations along the sharpest direction of the loss induce an effect in the local cubic approximation of the loss that pushes back the dynamics in the direction of sharpness reduction. 

Overall, the resulting iterates form a rough path, both due to the oscillations along the sharpest direction and the fluctuations of the self-stabilization mechanism. This indicates that the optimization path cannot be approximated by a smooth continuous-time flow such as the gradient flow of the loss. This is a critical problem as the gradient flow is largely used to model gradient descent in deep learning. 

Instead, one can expect a continuous-time approximation for the \emph{averaged} trajectory, where the averaging is performed over the oscillations along the sharpest direction, and potentially over the fluctuations of the self-stabilization mechanism. Averaging over both, Cohen et al.~\cite{cohen2024understanding} proposed the \emph{central flow}: the gradient flow of the loss constrained to the stable region where the sharpness is below~$2/\eta$. Through extensive numerical experiments, they have shown that the central flow is an excellent approximation of the averaged trajectory across a wide range of datasets and architectures. 

\paragraph{Problem.} However, the derivation of the central flow in \cite{cohen2024understanding} is heuristic. Several approximation steps are left informal, for instance using a local cubic approximation of the loss, or the meaning of the ``local time-averaging'' operator $\E$. This raises the following question: is there a regime in which the central flow is the limit of gradient descent? 

The regime $\eta\to0$ of vanishing learning rate, used to derive the gradient flow, is not appropriate to answer this question. In this regime, the sharpness is always below $2/\eta$, and thus edge of stability never occurs. 

\paragraph{Contributions.} In this work, we propose a perturbative regime---the \emph{sharp valley assumption}---from which we derive the self-stabilization mechanism and the central flow. Mathematically, we assume that we run gradient descent with learning rate $\eta$ on a loss function $f(z) = g(z) + \varepsilon h(z)$. The function $g$ defines a valley: it has several minimizers and we denote $V = \{z : g(z) = \min g\}$ the valley of its minimizers. Further, $\varepsilon \ll 1$ is a small parameter and $h$ is a perturbation that induces the dynamics along the valley. Equivalently, gradient descent 
with learning rate $\varepsilon \eta$ on the loss function $\varepsilon^{-1} g + h$ gives the same sequence of iterates. The parameter $\varepsilon$ can thus be interpreted as the inverse sharpness of the valley. 

The sharp valley picture has been present in several works \cite{xing2018walk, jastrzkebski2018relation, wen2024understanding,liu2025focus, liu2025neural,song2025does}, as it is consistent with several empirical observations, such as the anti-alignment of successive iterates or the approximate low-rank structure of the Hessian (see Sec.~\ref{sec:related} for a more complete discussion). A contribution of this work is to show that it is also consistent with the edge of stability, self-stabilization and the central flow.

Indeed, in the regime $\varepsilon \to 0$, the central flow is the formal limit of gradient descent. More precisely, in this regime, gradient descent is a singularly perturbed dynamical system in which three timescales emerge: a fast timescale $t_1 = k$, the iteration number $k$, which is the typical timescale of the oscillations along the sharpest direction; an intermediate timescale $t_2 = \varepsilon^{1/2} k$, which is the typical timescale of the self-stabilization mechanism; and a slow timescale $t_3 = \varepsilon k$, which is the typical timescale of the central flow---the evolution along the valley. 

We then use the \emph{method of multiple scales}~\cite{holmes2012introduction} to derive an approximation of gradient descent that decouples the three timescales $t_1$, $t_2$, and $t_3$. The leading term in the approximation is the central flow, which evolves on the slow timescale $t_3$. It consists in the gradient flow of $h$ in the valley, constrained to the stable region where the sharpness of $g$ is below $2/\eta$. The next-order term contains the oscillations along the sharpest direction as a function of $t_1$ and the self-stabilization fluctuations as a function of $t_2$. The method of multiple scales is a \emph{formal} method in the sense of asymptotic analysis: it produces the expansion by matching orders, without proving convergence or bounding the remainder. Our results should thus be interpreted as formal derivations, i.e., conjectures with strong supporting evidence.

To summarize, this work provides some justification of the central flow approximation by proposing a regime in which the approximation is tight and by providing an analytical tool---the method of multiple scales---to derive the approximation. 

\paragraph{Further consequences.} Beyond the clarification of previous works, the framework of this paper enables some new discussions.


We study the self-stabilization mechanism in more depth: we derive the self-stabilization system with multiple eigenvalues on the edge of stability (left open in \cite{damian2022self}), and discuss the evolution of the self-stabilization fluctuations on the slow timescale $t_3$. With a single eigenvalue on the edge of stability, we compute the variations of the energy of the fluctuations; with multiple eigenvalues on the edge of stability, we explain the persistence of fluctuations. 

Finally, we compare the central flow with competing continuous-time approximations of gradient descent \cite{regis2026rod, marion2026edge, de2026gradient}. We obtain that the central flow is a more coarse-grained approximation than the competing approximations, as it depends only on~$t_3$, while the other approximations depend on $t_2$ and $t_3$. 

\paragraph{Outline.} This paper is organized as follows. In Sec.~\ref{sec:setting}, we introduce our mathematical setting, including the sharp valley assumption. The strongest result of this paper is provided in Sec.~\ref{sec:general}; however, for the sake of clarity and to enable additional discussions, previous sections provide simpler versions of the result. In Sec.~\ref{sec:cf}, we present only the results on the convergence to the central flow. In Sec.~\ref{sec:minimal}, we include the discussion of the next-order term in the convergence, but in a minimal example. The simplicity in this example allows to study potential failures of the central flow approximation and the slow evolution of the self-stabilization loop. Further, in Sec.~\ref{sec:codim-one}, we discuss the case of valleys of codimension one, which is more general than the minimal example but still simpler than the general case. In Sec.~\ref{sec:general}, we discuss the general case of valleys of arbitrary codimension. Finally, in Sec.~\ref{sec:related}, we discuss the connections to further related work, including the comparison with competing approxiamtions.

\section{Setting}
\label{sec:setting}

Let $f(z)$, $z \in Z$, be a loss function, where $Z$ is a finite-dimensional vector space. We study the dynamics of gradient descent on $f$ with learning rate $\eta > 0$:
\begin{equation*}
    z(k+1) = z(k) - \eta \nabla f(z(k)) \, .
\end{equation*}
We assume that the loss function $f$ can be decomposed as $f(z) = g(z) + \varepsilon h(z)$, where $g$ defines a valley and $\varepsilon \ll 1$ is a small parameter. We denote $V = \{ z \in Z : g(z) = \min g\}$ the valley of minimizers of $g$. 


To simplify the analyses, we assume that the valley defined by $g$ is a linear space. More precisely, we assume there exists an orthogonal decomposition $Z = X \oplus^\perp Y$, in which we decompose $z = (x, y)$, such that the valley defined by $g$ is
\begin{equation*}
    V = \{(0,y), y \in Y\} \, . 
\end{equation*} 
In a more complex setting where $V$ would be a manifold, the decomposition $z = (x,y)$ would represent a local set of coordinates to parametrize the manifold $V$ of minimizers of $g$. 

Further, for all $z$, we denote $S(z) = \nabla^2_x g(z)$ the Hessian of $g$ in the sharp directions---those orthogonal to the valley. We assume that for all $(0,y) \in V$, $S(0, y) \succ 0$: there is some curvature of~$g$ in the sharp directions along the valley. Finally, we denote 
\begin{equation*}
    V_{\mathrm{S}} = \{ z \in V : \eta S(z) \preccurlyeq 2 \Id_X\} 
\end{equation*}
the \emph{stable} valley of $g$, in which the sharpness is below $2/\eta$. 

\paragraph{Notations.} When $y \in \R^d$ is a vector, we denote $y[1], \dots, y[d]$ its components. Let $A$ and $B$ be Euclidean spaces. When $L: A \to B$ is a linear operator, we denote $L^*: B \to A$ its adjoint. We denote $\Sym A$ the set of symmetric linear operators on $A$, and $\Sym_+ A$ the set of symmetric positive semidefinite linear operators on $A$. We denote $\Id_A$ the identity operator on $A$. When $C$ is a linear subspace of $A$, we denote $C^\perp$ its orthogonal complement, $P_C : A \to C$ the orthogonal projection onto $C$, and $\iota_C : C \to A$ the canonical inclusion of $C$ into $A$. Note that $\iota_C = P_{C}^*$.

\section{Convergence to the central flow}
\label{sec:cf}

In this section, we state our convergence results to the central flow. More complete results, including next-order terms, are provided in the following sections.

As the gradient of $f = g + \varepsilon h$ in the valley is of order $\varepsilon$, the typical timescale of the dynamics of the iterates $z(k) = (x(k), y(k))$ along the valley is $t_3 = \varepsilon k$. Abusing notations, let us denote 
\begin{align*}
    &z(t_3) = z\left(k=\lfloor t_3/\varepsilon \rfloor\right) \, , &&x(t_3) = x\left(k=\lfloor t_3/\varepsilon \rfloor\right) \, , &&y(t_3) = y\left(k=\lfloor t_3/\varepsilon \rfloor\right) \, ,
\end{align*}
the time-rescaled iterates, that now depend on a continuous time parameter $t_3$. Note that the resulting function $z(t_3)$ depends on $\varepsilon$ both through the rescaling of time and through the loss function. We now state convergence results to the central flow, obtained by the method of multiple scales.

\paragraph{Valleys of codimension one.}For the sake of clarity, we start with the case where the valley is of codimension $1$, i.e., the dimension of $X$ is $1$. Note that, in this case, $S(z)$ is a scalar.

\begin{result}
    \label{result:conv-cf}
    Assume that $X$ is of dimension $1$. Further, assume that as $\varepsilon \to 0$, the rescaled iterates $z(t_3) = (x(t_3), y(t_3))$ converge to some function $z_0(t_3) = (x_0(t_3), y_0(t_3))$. Moreover, assume that $z_0$ remains in the stable valley $V_{\mathrm{S}}$, i.e., $x_0 = 0$ and $\eta S(0, y_0) \leq 2$. 

    Then $z_0$ follows the gradient flow of $h$ constrained to the stable valley $V_{\mathrm{S}}$, i.e., there exists $\sigma^2 = \sigma^2(t_3) \geq 0$ such that 
    \begin{align*}
        \partial_{t_3} y_0 &= - \eta \left[\nabla_y h(0, y_0) +\frac{\sigma^2}{2} \nabla_y S(0, y_0) \right] \, , 
    \end{align*}
    where for all $t_3$, $\sigma^2(t_3) = 0$ or $\eta S(0, y_0(t_3)) = 2$. 
\end{result}

Again, the method of multiple scales is formal, thus the above result should not be interpreted as a mathematical theorem but rather as a conjecture with strong supporting evidence. All the results of this paper share this status. An introduction to the method of multiple scales is provided in Sec.~\ref{sec:mms}.

The function $\sigma^2$ should be interpreted as the Lagrange multiplier associated with the constraint $\eta S(0, y_0) \leq 2$: it satisfies dual feasibility ($\sigma^2 \geq 0$) and complementarity slackness with the constraint ($\sigma^2 = 0$ or $\eta S(0, y_0) = 2$). 

We illustrate the result in Fig.~\ref{fig:codim-one-proposal}. We observe the edge of stability phenomenon, the self-stabilization fluctuations, and the convergence to the central flow $(x_0, y_0)$ as $\varepsilon \to 0$. A more pedagogical example is provided in the next section.

\begin{figure}[ht]
    \centering
        \includegraphics[width=\linewidth]{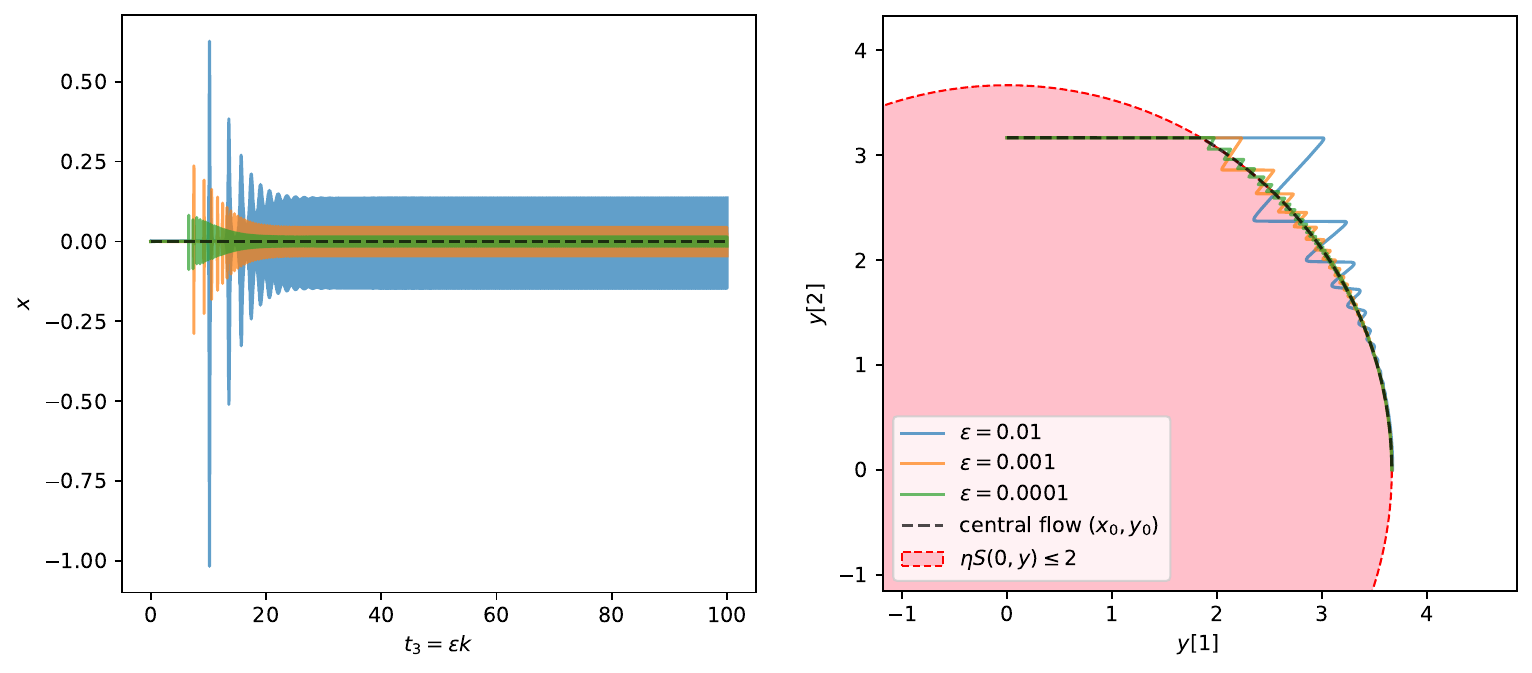}
    \caption{Iterates of gradient descent with learning rate $\eta = 0.3$ on $f = g + \varepsilon h$, with $\varepsilon = 0.01, 0.001, 0.0001$. In this example, we take $Z = \R^3$, $X = \R$ and $Y = \R^2$, thus the valley is of dimension $2$ and codimension $1$. Further, we take $g(x,y) = \frac{1}{2}(\Vert y \Vert + 3 e^{x/2}) x^2$ and $h(x,y) = -x - y[1]$ (where $y[1]$ denotes the first component of $y$). Note that, as required by Sec.~\ref{sec:setting}, the valley of minimizers of $g$ is $V = \{(0,y), y \in \R^2\}$. Moreover, $S(0, y) = \Vert y \Vert + 3$ and thus $V_{\mathrm{S}}$ is a closed disk: $V_{\mathrm{S}} = \{(0,y), \Vert y \Vert \leq 2/\eta-3\}$. }
    \label{fig:codim-one-proposal}
\end{figure}

We now discuss the assumptions of Result~\ref{result:conv-cf}. The statement $x_0 = 0$ is not strictly needed and could instead be obtained as a consequence of the method of multiple scales. However, treating $x_0$ in the method derivations results in more complicated computations, thus we prefer making the reasonable assumption that the limiting dynamics are in the valley $V$. Moreover, note that we could initialize $x$ at a non-zero value, which would result in a relaxation of $x_0$ to $0$ on the timescale $t_1$. This early part of the dynamics is not interesting for our study of the edge of stability, thus we do not include it here. 

On the contrary, we need the assumption $\eta S(0,y_0) \leq 2$ to derive the result; we do not know how to obtain it as a by-product of the method. In fact, it is possible to build artificial examples where this condition breaks and the central flow prediction fails; this is discussed below in Remark \ref{rmk:cf-failure}. 

\paragraph{Valleys of arbitrary codimension.}We now turn to the more general case where the dimension of $X$ is arbitrary. Recall that we denote $S(z) = \nabla^2_x g(z) \in \Sym X$. Further, we denote 
\begin{align*}
        &D_y S(x,y) : Y \to \Sym X  \\
    &\hspace{5.5em} \delta y \mapsto D_y S(x,y)[\delta y] 
\end{align*}
the differential of $S$ in the $y$ variable, and consequently, $D_y S(x,y)^* : \Sym X \to Y$ the adjoint of $D_y S(x,y)$. Finally, we denote $\langle ., . \rangle$ the canonical inner product on $\Sym X$. 

\begin{result}
    \label{result:conv-cf-general}
    We no longer assume that $X$ is of dimension~$1$. Further, assume that as $\varepsilon \to 0$, the rescaled iterates $z(t_3) = (x(t_3), y(t_3))$ converge to some function $z_0(t_3) = (x_0(t_3), y_0(t_3))$. Moreover, assume that $z_0$ remains in the stable valley $V_{\mathrm{S}}$, i.e., $x_0 = 0$ and $\eta S(0, y_0) \preccurlyeq 2 \Id_X$. 

    Then $z_0$ follows the gradient flow of $h$ constrained to the stable valley $V_{\mathrm{S}}$, i.e., there exists $\Sigma = \Sigma(t_3) \in \Sym_+ X$ such that 
    \begin{align*}
        \partial_{t_3} y_0 &= - \eta \left[\nabla_y h(0, y_0) +\frac{1}{2} D_y S(0, y_0)^*[\Sigma] \right] \, , 
    \end{align*}
    where for all $t_3$, $\langle \Sigma(t_3), 2 \Id_X - \eta S(0, y_0(t_3)) \rangle = 0$. 
\end{result}

Note that Result~\ref{result:conv-cf-general} is a generalization of Result~\ref{result:conv-cf}. The function $\Sigma$ should be interpreted as the Lagrange multiplier associated with the constraint $\eta S(0, y_0) \preccurlyeq 2 \Id_X$: it satisfies dual feasibility ($\Sigma \in \Sym_+ X$) and complementarity slackness with the constraint ($\langle \Sigma, 2 \Id_X - \eta S(0, y_0) \rangle = 0$).

We illustrate the result in Fig.~\ref{fig:general}.

\begin{figure}[htp]
    \centering
        \includegraphics[width=\linewidth]{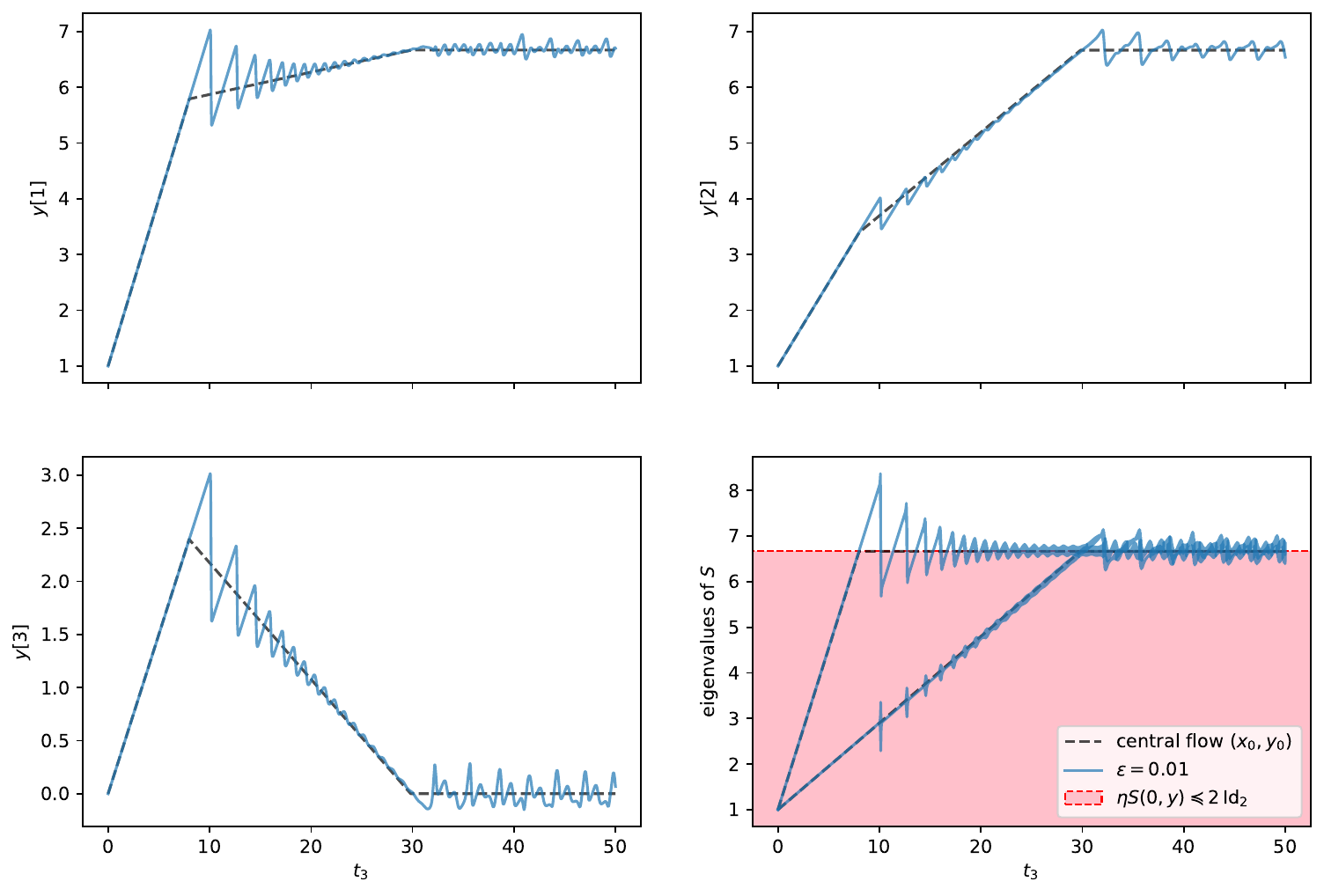}
    \caption{Iterates of gradient descent with learning rate $\eta = 0.3$ on $f = g + \varepsilon h$, with $\varepsilon = 0.01$. In this example, we take $Z = \R^5$, $X = \R^2$ and $Y = \R^3$, thus the valley is of dimension~$3$ and codimension $2$. Further, we take $g(x,y) = \frac{1}{2}x^\top H[y] x + \frac{1}{2}(x[1]^3 + x[2]^3)$
    with $H[y] = \begin{pmatrix}
    y[1] & \frac{y[3]}{\sqrt{2}} \\
    \frac{y[3]}{\sqrt{2}} & y[2] 
    \end{pmatrix}$and $h(x,y) = -x[1] - 2y[1]-y[2]-y[3]$. Note that the assumptions of Sec.~\ref{sec:setting} are satisfied in the region where $H[y] + \diag(x[1], x[2]) 
    \succ 0$ (where the iterates remain).
Moreover, $S(0, y) = H[y] $ and $V_{\mathrm{S}} = \left\{(0,y), \eta H[y] \preccurlyeq 2 \Id_2\right\}$.}
    \label{fig:general}
\end{figure}

The method of multiple scales also provides information on the oscillations and the self-stabilization fluctuations through the next-order term $z_1 = (x_1, y_1)$ in the expansion as $\varepsilon \to 0$. These complements are provided in Sec.~\ref{sec:codim-one} for valleys of codimension one and in Sec.~\ref{sec:general} for the general case. These aspects require discussing in more detail how the method of multiple scales works. For pedagogical purposes, we start doing so on a minimal example in the next section.

\section{Minimal example}
\label{sec:minimal}

\subsection{Introduction and experiments}

Consider the case $Z = \R^2$, $X = Y = \R$, thus $z = (x,y) \in \R^2$. Further, we consider the following choices determining the loss function $f = g + \varepsilon h$: 
\begin{align}
    &g(x,y) = \frac{1}{2} (y+\zeta x) x^2 \, , &&h(x,y) = -x-y \, . \label{eq:minimal}
\end{align}
Here, $\zeta \in \R$ is a parameter. This function $g$ does not rigorously satisfy the valley structure required in Sec.~\ref{sec:setting}. However, this is benign: below, we consider the function $g$ only in the region where $y+\zeta x > 0$. In this region, the minimum of $g$ is $0$ and its valley of minimizers is $V = \{(0,y), y > 0\}$. In the valley, $h$ decreases as $y$ increases, thus the gradient descent dynamics move along the valley in the direction of increasing~$y$. However, the sharpness of $g$ in the valley is $S(0,y) = \partial_{x}^2g(0,y) = y$, thus there is progressive sharpening, until the dynamics hit the edge of stability at $y = 2/\eta$. There are then self-stabilization fluctuations around the value $y = 2/\eta$. These phenomena are illustrated in Fig.~\ref{fig:minimal-example}.

\begin{figure}[ht]
    \centering
    \includegraphics[trim=128bp 27bp 118bp 71bp, clip, height=8cm]{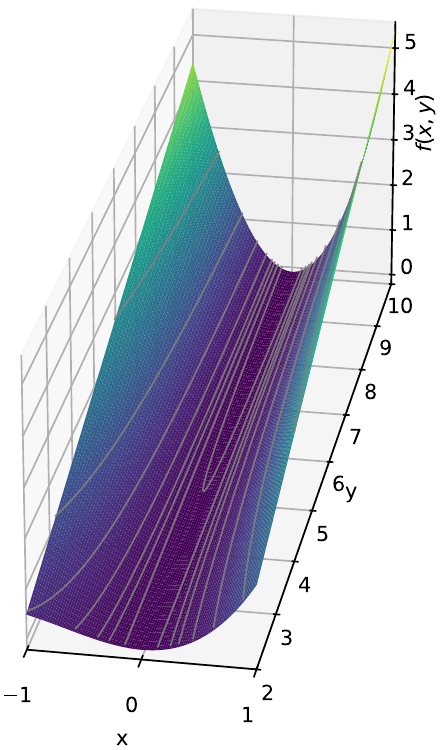}%
    \hspace{1.2cm}%
    \includegraphics[trim=8bp 8bp 6bp 6bp, clip, height=8cm]{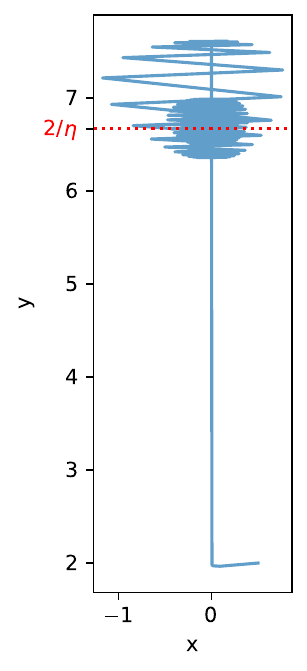}
    \caption{For $\varepsilon = 0.01$, $\zeta = 1$ and $g, h$ as provided in the minimal example of Eqs.~\eqref{eq:minimal}, we plot the loss function $f = g + \varepsilon h$ (left) and the iterates of gradient descent on $f$ with learning rate $\eta = 0.3$ (right). The iterates stabilize on the edge of stability $y = 2/\eta$ after some self-stabilization fluctuations. 
    \label{fig:minimal-example}}
\end{figure}

The reader might object that the above example is not the simplest possible one with the same properties. For instance, one could consider $g(x,y) = \frac{1}{2} y x^2$ and $h(x,y) = -y$. However, it turns out that these choices would not lead to the exact same phenomenology, and the ones we make lead to a more faithful reproduction of the phenomenology of deep learning optimization. In short, the $x^3$ term in $g$ reproduces the damping of the self-stabilization loop, and the $-x$ term in $h$ breaks the metastability in $x$. These aspects are discussed in Sec.~\ref{sec:energy} and Remark~\ref{rmk:cf-failure} respectively.

In Figure~\ref{fig:minimal-example-2}, we plot the iterates of gradient descent as a function of $t_3$, for various values of $\varepsilon$. We observe that the iterates converge to the central flow $(x_0, y_0)$ as $\varepsilon \to 0$, that is, to the gradient flow of $h$ constrained to the stable valley $V_{\mathrm{S}} = \{(0,y), y \leq 2/\eta\}$.  

\begin{figure}[ht]
    \centering
        \includegraphics[width=\linewidth]{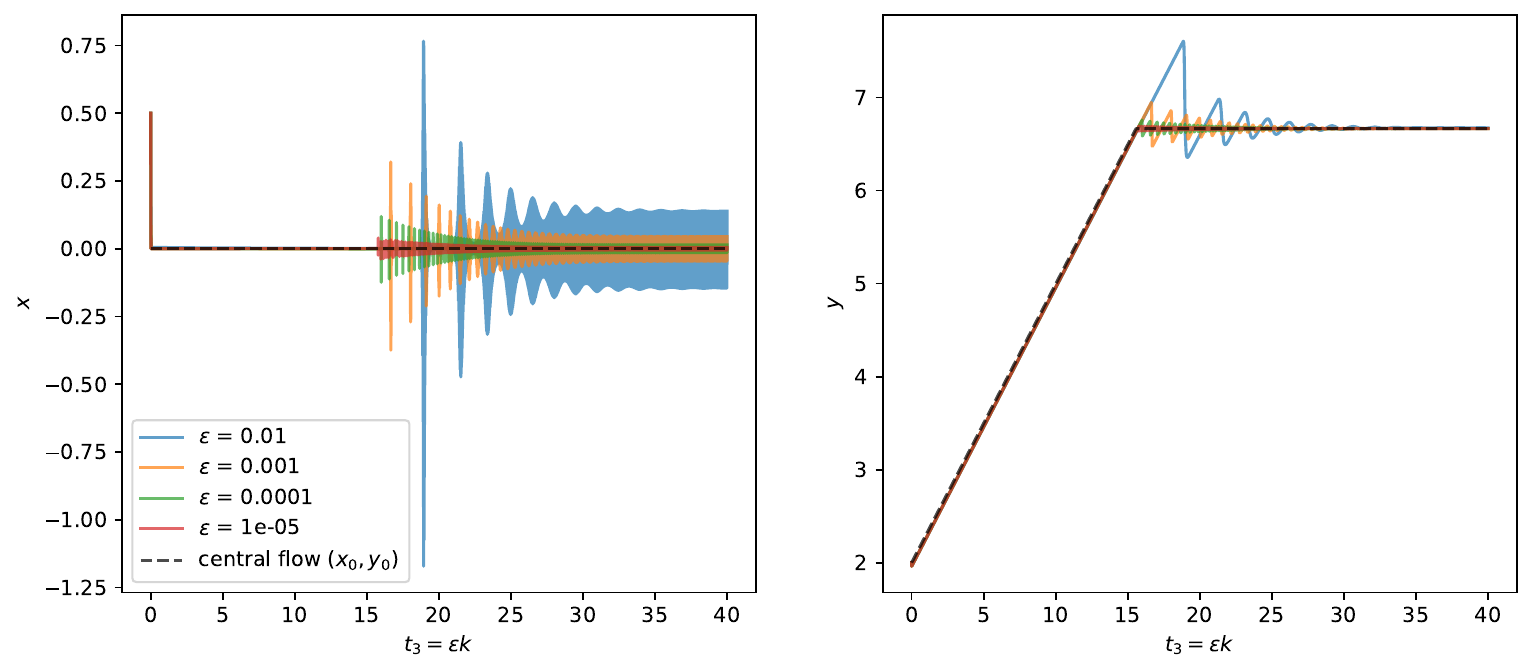}
    \caption{Iterates of gradient descent on the minimal example of Eqs.~\eqref{eq:minimal} with learning rate $\eta = 0.3$, $\zeta = 1$ and $\varepsilon = 10^{-2}, 10^{-3},  10^{-4}, 10^{-5}$.}
    \label{fig:minimal-example-2}
\end{figure}

At the edge of stability, the oscillations of $x$ and the self-stabilization fluctuations appear in the next-order term of the approximation. As will be confirmed by the method of multiple scales below, this next order term is of order $\varepsilon^{1/2}$. In Figure~\ref{fig:minimal-example-3}, we plot the rescaled iterates $\varepsilon^{-1/2} (x - x_0, y - y_0)$ as functions of the slow timescale $t_3 = \varepsilon k$ and the intermediate timescale $t_2 = \varepsilon^{1/2} k$. In the limit $\varepsilon \to 0$, we observe that the self-stabilization loop and its slow damping decouple on the timescales $t_2$ and $t_3$. While the typical time of a self-stabilization fluctuation is $t_2$, the typical time of the damping of the self-stabilization loop is $t_3$.

\begin{figure}[htp]
    \centering
    \includegraphics[width=\linewidth]{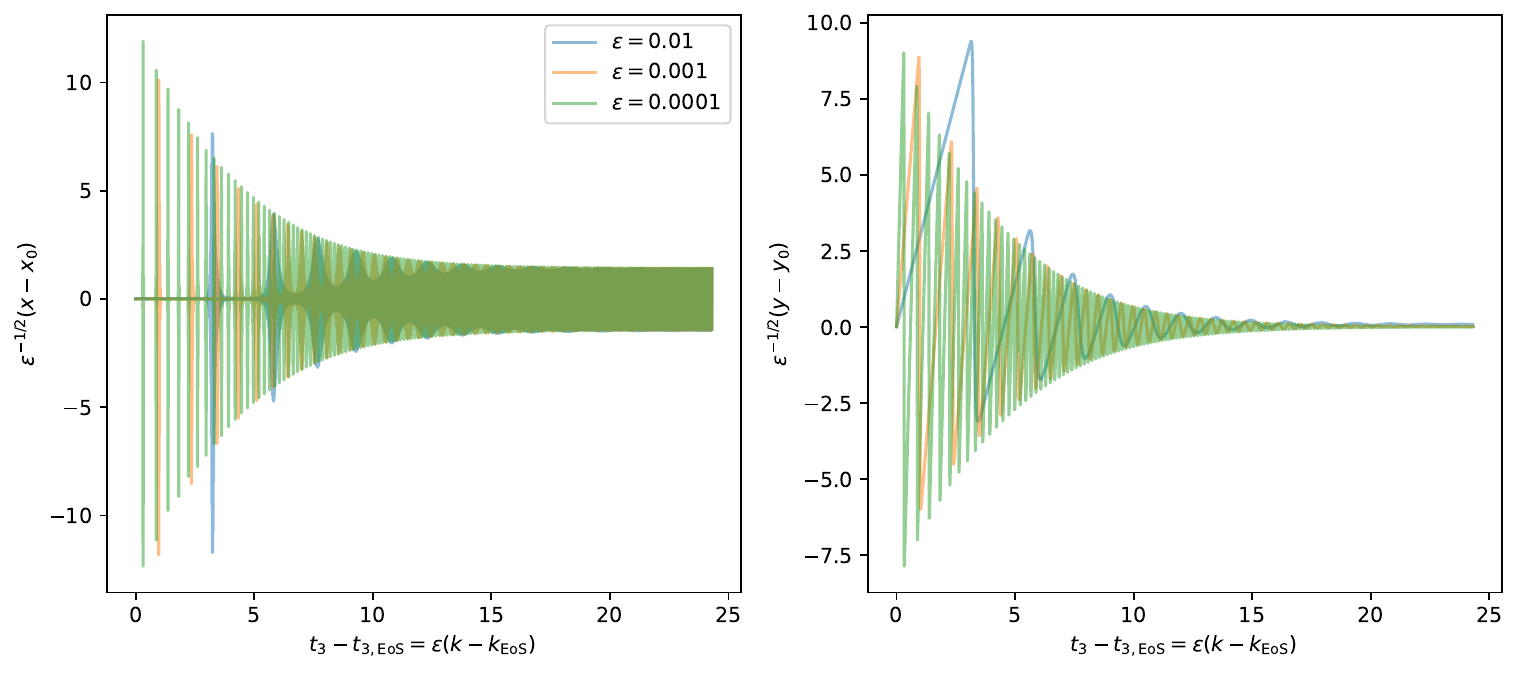}
        \includegraphics[width=\linewidth]{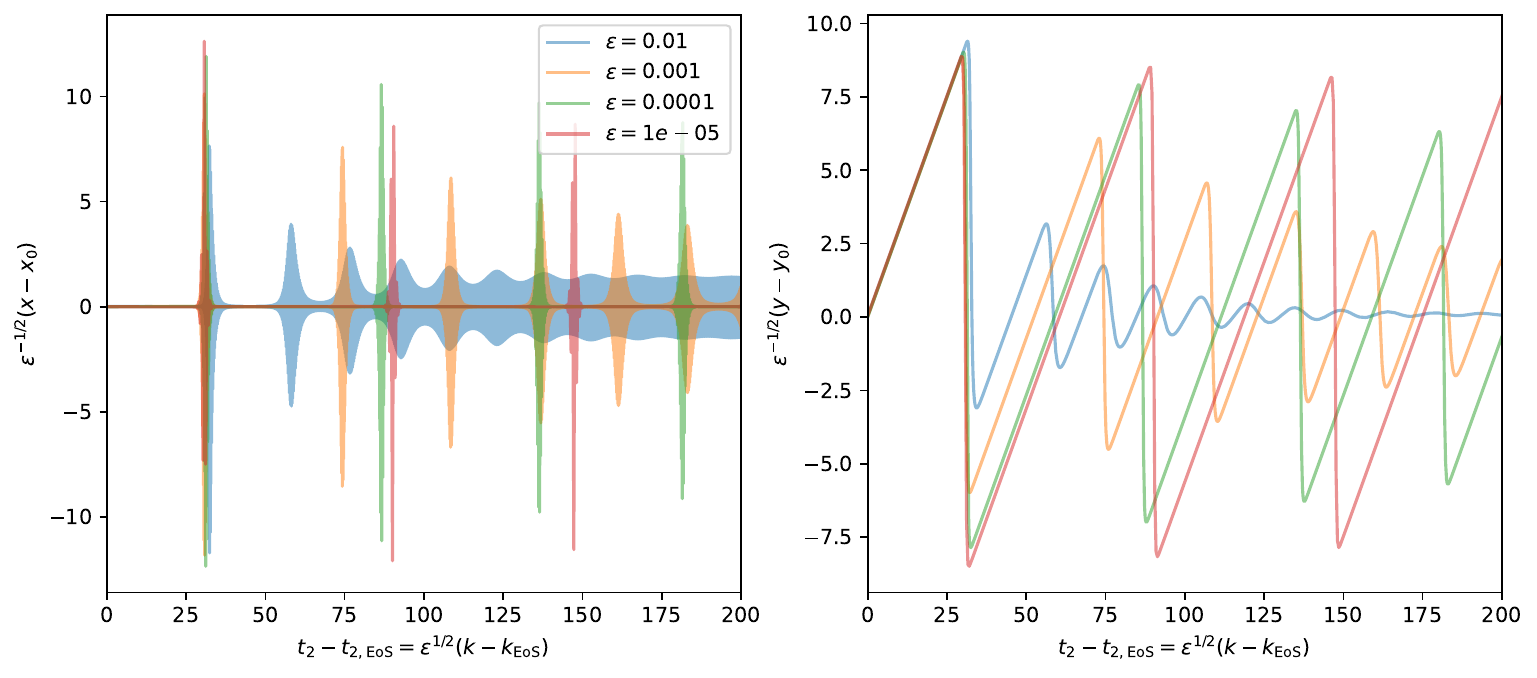}
    \caption{Rescaled iterates of gradient descent on the minimal example of Eqs.~\eqref{eq:minimal} with $\eta = 0.3$, $\zeta = 1$ and $\varepsilon = 10^{-2}, 10^{-3},  10^{-4}, 10^{-5}$. We denote $k_{\text{EoS}}$ the first iteration number for which $y > 2/\eta$: it encodes the beginning of the edge of stability. We denote $t_{2,\text{EoS}} = \varepsilon^{1/2}k_{\text{EoS}}$ and $t_{3,\text{EoS}} = \varepsilon k_{\text{EoS}}$ the corresponding rescaled times, and we start the plots at these times. We do not plot the trajectory with $\varepsilon = 10^{-5}$ in the upper plots for readability. }
    \label{fig:minimal-example-3}
\end{figure}

\subsection{Method of multiple scales}
\label{sec:mms}

The goal of this section is to confirm and refine the empirical observations above using the method of multiple scales. It is a perturbation method: in our case, the perturbation parameter is $\varepsilon \ll 1$, and it seeks an expansion of the form
\begin{align}
    \label{eq:minimal-expansion}
    \begin{split}
    x &= x_0 + \varepsilon^{1/2} x_1 + \varepsilon x_2 + \dots \, , \\
    y &= y_0 + \varepsilon^{1/2} y_1 + \varepsilon y_2 + \dots \, .
    \end{split}
\end{align}
The goal is to obtain expressions or evolution equations for the leading terms of this expansion. Here, the challenge is that the terms depend simultaneously on various timescales $t_1 = k$, $t_2 = \varepsilon^{1/2} k$, and $t_3 = \varepsilon k$, that depend themselves on $\varepsilon$. This is a typical situation where the method of multiple scales is useful.

Let us first give an overview of the method. The method postulates that the timescales $t_1$, $t_2$, and $t_3$ can be decoupled by treating them as independent parameters. In our case, $t_1$ is a discrete parameter while $t_2$ and $t_3$ are continuous parameters. It then treats all functions of $k$ as functions of the three independent timescales: $z = z(t_1, t_2, t_3)$, $x = x(t_1, t_2, t_3)$, $y = y(t_1, t_2, t_3)$, $x_0 = x_0(t_1, t_2, t_3)$, etc. The difference equation $\Delta_k z = z(k+1) - z(k) =  - \eta \nabla f(z)$ in $z = z(k)$ is transformed into a partial difference-differential equation in $z = z(t_1, t_2, t_3)$. By identifying the different terms in the expansion of this equation, we infer properties for the terms $x_0 = x_0(t_1, t_2, t_3)$, $y_0 = y_0(t_1, t_2, t_3)$, $x_1 = x_1(t_1, t_2, t_3)$, etc. The final approximation is provided by using the expansion of Eq.~\eqref{eq:minimal-expansion} and substituting the actual timescales $t_1 = k$, $t_2 = \varepsilon^{1/2} k$, and $t_3 = \varepsilon k$.

We obtain the following result for our minimal example.

\begin{result}
    \label{result:minimal}
Assume that $(x_0, y_0)$ remains in the stable valley, i.e., $x_0 = 0$ and $\eta y_0 \leq 2$. Then $y_0 = y_0(t_3)$, i.e., $y_0$ does not depend on $t_1$ or $t_2$; and $y_1 = y_1(t_2, t_3)$, i.e., $y_1$ does not depend on $t_1$. Further, we have the following:
\begin{enumerate}[label=(\roman*)]
    \item\label{it:interior} If $\eta y_0 <2$ (relative interior of the stable valley), then $\partial_{t_3} y_0 = - \eta \partial_y h(x_0, y_0) = \eta$. 
    \item\label{it:eos} If $\eta y_0 = 2$ (edge of stability), then $x_1 = (-1)^{t_1}a_1$ for some $a_1 = a_1(t_2, t_3)$. Further, we have the self-stabilization system 
    \begin{equation}
        \label{eq:self-stab-minimal}
        \begin{split}
            \partial_{t_2} a_1 &= \eta a_1 y_1 \, , \\
            \partial_{t_2} y_1 &= \eta \left(1 - \frac{a_1^2}{2} \right) \, . 
        \end{split}
    \end{equation}
\end{enumerate}
\end{result}
This result confirms that $(x_0, y_0)$ follows the central flow prediction. It moves in the stable valley $V_{\mathrm{S}} = \{(0, y) : \eta y \leq 2\}$, following the constrained gradient flow of~$h$, until it hits the constraint $\eta y_0 \leq 2$. It then stays on the edge of stability threshold. This is consistent with the experiment of Fig.~\ref{fig:minimal-example-2}.

This result also gives information about the next-order term $(x_1, y_1)$ at the edge of stability. While $y_1$ does not depend on $t_1$, $x_1$ oscillates in $t_1$, with amplitude $a_1(t_2, t_3)$. Again, this is consistent with the experiments of Figs.~\ref{fig:minimal-example-2} and \ref{fig:minimal-example-3}. Moreover the joint evolution of $a_1$ and $y_1$ in $t_2$ is provided by the system of ordinary differential equations (ODEs) \eqref{eq:self-stab-minimal}. This corresponds to the self-stabilization equations derived in \cite{damian2022self}. 

As noted in \cite{damian2022self}, these self-stabilization dynamics preserve the energy 
\begin{equation*}
    E = \frac{y_1^2}{2} + \frac{a_1^2}{4} - \frac{1}{2} \log a_1^2 \, , 
\end{equation*}
resulting in $(a_1, y_1)$ being periodic in $t_2$. As a consequence, $E = E(t_3)$, i.e., $E$ evolves only on the slow timescale $t_3$. This is consistent with the experiment of Fig.~\ref{fig:minimal-example-3}. The evolution of $E$ in $t_3$ is discussed in Sec.~\ref{sec:energy}. 

We now turn to the derivation of Result~\ref{result:minimal}. The situation in which we apply the method of multiple scales is rather sophisticated, due to the presence of three timescales (two is more common) and the presence of a discrete timescale $t_1$. For more pedagogical illustrations of the method, we recommend \cite{holmes2012introduction}.

\begin{proof}[Derivation of Result~\ref{result:minimal}]
As written above, the method of multiple scales operates by identifying the several scales in the gradient descent iteration $\Delta_k z = - \eta \nabla f(z)$. We thus expand both sides as functions of $\varepsilon$. For the right-hand side, we have
\begin{align}
    -\eta \nabla f(z) &= -\eta \nabla g(z) - \eta \varepsilon \nabla h(z) \nonumber\\
    &= -\eta \left[\nabla g\left(z_0 + \varepsilon^{1/2} z_1 + \varepsilon z_2 + o(\varepsilon)\right) + \varepsilon \nabla h\left(z_0 + o(1)\right) \right] \nonumber\\
    &= -\eta \bigg[\nabla g(z_0) + \nabla^2 g(z_0) \left(\varepsilon^{1/2} z_1 + \varepsilon z_2\right) + \frac{1}{2} \nabla^3 g(z_0) \left[\varepsilon^{1/2} z_1 , \varepsilon^{1/2} z_1 \right] \nonumber\\
    &\qquad\quad+ \varepsilon \nabla h (z_0) + o (\varepsilon) 
    \bigg] \nonumber\\
    &= -\eta \nabla g(z_0) -  \varepsilon^{1/2} \eta \nabla^2 g(z_0) z_1 - \varepsilon\eta \left[\nabla^2 g(z_0) z_2 + \frac{1}{2} \nabla^3 g(z_0) [z_1, z_1] + \nabla h(z_0)\right] \nonumber\\
    &\qquad
    + o(\varepsilon) \label{eq:rhs-general}\, .
\end{align}
Using the expressions for $g$ and $h$ of Eqs.~\eqref{eq:minimal} and the assumption $x_0 = 0$, we obtain 
\begin{align}
    \label{eq:rhs-minimal}
    \begin{split}
    -\eta \partial_x f(x,y) &= - \varepsilon^{1/2} \eta y_0 x_1 - \varepsilon \eta \left[y_0 x_2 + \frac{3}{2} \zeta x_1^2 +  x_1 y_1 - 1\right] + o(\varepsilon) \, , \\
    -\eta \partial_y f(x,y) &= - \varepsilon \eta \left[\frac{x_1^2}{2} - 1\right] + o(\varepsilon) \, .
    \end{split}
\end{align}
The expansion of the left-hand side $\Delta_k z$ is more subtle because it requires to express the discrete difference operator $\Delta_k$ applied to a function $z = z(t_1, t_2, t_3)$. As $t_1$, $t_2$, and~$t_3$ depend on $k$, the expression of $\Delta_k$ depends on the discrete difference operator~$\Delta_{t_1}$ and the partial derivatives $\partial_{t_2}$ and $\partial_{t_3}$.
\begin{align}
    \Delta_k z &= z(t_1 + 1, t_2 + \varepsilon^{1/2}, t_3 + \varepsilon) - z(t_1, t_2, t_3) \nonumber\\
    &= z(t_1 + 1, t_2 + \varepsilon^{1/2}, t_3 + \varepsilon) - z(t_1 , t_2 + \varepsilon^{1/2}, t_3 + \varepsilon) \nonumber\\
    &\qquad+ z(t_1 , t_2 + \varepsilon^{1/2}, t_3 + \varepsilon) - z(t_1, t_2, t_3) \nonumber\\
    &= (\Delta_{t_1} z)(t_1, t_2 + \varepsilon^{1/2}, t_3 + \varepsilon) + z(t_1 , t_2 + \varepsilon^{1/2}, t_3 + \varepsilon) - z(t_1, t_2, t_3) \label{eq:aux-4}
\end{align}
We first decompose 
\begin{align*}
    &z(t_1 , t_2 + \varepsilon^{1/2}, t_3 + \varepsilon) - z(t_1, t_2, t_3) = \varepsilon^{1/2} \partial_{t_2} z + \varepsilon \partial_{t_3} z + \frac{\varepsilon}{2} \partial_{t_2}^2 z + o(\varepsilon) \, .
\end{align*}
Note that we do not write the evaluations of the partial derivatives at $(t_1, t_2, t_3)$ for readability. Using a similar expansion for $(\Delta_{t_1} z)(t_1, t_2 + \varepsilon^{1/2}, t_3 + \varepsilon)$, we obtain 
\begin{align*}
    \Delta_k z &= \Delta_{t_1} z + \varepsilon^{1/2} \partial_{t_2} z + \varepsilon \partial_{t_3} z + \frac{\varepsilon}{2} \partial_{t_2}^2 z + \varepsilon^{1/2} \partial_{t_2} \Delta_{t_1} z + \varepsilon \partial_{t_3} \Delta_{t_1}z + \frac{\varepsilon}{2} \partial_{t_2}^2 \Delta_{t_1}z +o(\varepsilon) \\
    &= \Delta_{t_1} z + \varepsilon^{1/2} \left[\partial_{t_2} z + \partial_{t_2} \Delta_{t_1} z\right] + \varepsilon \left[\partial_{t_3} z + \frac{1}{2} \partial_{t_2}^2 z + \partial_{t_3} \Delta_{t_1}z + \frac{1}{2} \partial_{t_2}^2 \Delta_{t_1}z\right] + o(\varepsilon) \, . 
\end{align*}
We combine the above decomposition with the expansion of $z$ 
\begin{equation*}
    z = z_0 + \varepsilon^{1/2} z_1 + \varepsilon z_2 +  o(\varepsilon) \, .
\end{equation*}
We obtain 
\begin{align}
    \label{eq:lhs}
    \begin{split}
    \Delta_k z &= \Delta_{t_1} z_0 + \varepsilon^{1/2} \left[\Delta_{t_1} z_1 + \partial_{t_2} z_0 +  \partial_{t_2} \Delta_{t_1}z_0 \right] \\
    &\quad+ \varepsilon \left[\Delta_{t_1} z_2 + \partial_{t_2} z_1 + \partial_{t_2} \Delta_{t_1} z_1 + \partial_{t_3} z_0 + \frac{1}{2} \partial_{t_2}^2 z_0  + \partial_{t_3} \Delta_{t_1} z_0 + \frac{1}{2} \partial_{t_2}^2 \Delta_{t_1} z_0\right] \\
    &\quad+ o(\varepsilon) \, .
    \end{split}
\end{align}
Combining the expansions of Eqs.~\eqref{eq:rhs-minimal} and \eqref{eq:lhs}, we are ready to identify terms in the expansion of 
\begin{align}
    \Delta_k x &= -\eta \partial_x f(x,y) \, ,  \label{eq:x}\\
    \Delta_k y &= -\eta \partial_y f(x,y) \, . \label{eq:y}
\end{align}

\paragraph{Order $1$.} Recall that, by assumption, $x_0 = 0$. The $x$ equation \eqref{eq:x} does not give anything at order $1$. The $y$~equation~\eqref{eq:y} gives $\Delta_{t_1} y_0 = 0$, thus $y_0 = y_0(t_2, t_3)$ does not depend on $t_1$.

\paragraph{Order $\varepsilon^{1/2}$.} The $x$ equation \eqref{eq:x} gives 
\begin{equation*}
    \Delta_{t_1} x_1 = - \eta y_0 x_1 \, .
\end{equation*}
Here, $y_0$ should be interpreted as the sharpness of $g$ at $(x_0, y_0)$; this difference equation describes the linearized dynamics of $x$ around $(x_0, y_0)$. Its behavior depends on whether the edge of stability is reached. Fix $t_2$ and $t_3$. 
\begin{itemize}
    \item If $\eta y_0(t_2, t_3) < 2$, then $x_1 = x_1(t_1,t_2, t_3) \xrightarrow[t_1 \to \infty]{} 0$. 
    \item If $\eta y_0(t_2, t_3) = 2$, then $x_1 = (-1)^{t_1} a_1(t_2, t_3)$ for some $a_1$ that does not depend on~$t_1$. At the edge of stability, $x_1$ oscillates with an amplitude which is a function of $t_2$ and~$t_3$ only. 
\end{itemize}
We now turn to the $y$ equation \eqref{eq:y}. It gives
\begin{equation*}
    \Delta_{t_1} y_1 + \partial_{t_2} y_0 = 0 \, . 
\end{equation*}
Recall that $y_0$ does not depend on $t_1$. As a consequence, $y_1$ is an affine function of $t_1$, with slope $-\partial_{t_2} y_0$. However, having $y_1$ unbounded is not acceptable as it would imply that $\varepsilon^{1/2} y_1$ eventually become of order $1$, thus interfering with the leading order term $y_0$ in the expansion \eqref{eq:minimal-expansion}. In the method of multiple scales, such a term is called \emph{secular}, and the method prescribes to build an approximation that forbids secular terms, to keep the reasoning consistent \cite{holmes2012introduction}. This prescription is the core mechanism of the method of multiple scales; we will use it repeatedly. Here, we can thus conclude that $\Delta_{t_1} y_1 = -\partial_{t_2} y_0 = 0$. As a consequence, $y_1 = y_1(t_2, t_3)$ and $y_0 = y_0(t_3)$.

\paragraph{Order $\varepsilon$.} The $x$ equation \eqref{eq:x} gives
\begin{equation*}
    \Delta_{t_1} x_2 + \partial_{t_2} x_1 + \partial_{t_2} \Delta_{t_1} x_1 = - \eta \left[y_0 x_2 + \frac{3}{2} \zeta x_1^2 +  x_1 y_1 - 1\right] \, .
\end{equation*}
Here, we focus on understanding what happens at the edge of stability, i.e., when $\eta y_0 = 2$. We then know that $x_1 = (-1)^{t_1} a_1(t_2, t_3)$, and thus $\Delta_{t_1} x_1 = -2 (-1)^{t_1} a_1(t_2, t_3)$. We obtain 
\begin{equation}
    \label{eq:x2}
    \Delta_{t_1} x_2 +  2 x_2 = (-1)^{t_1} \partial_{t_2} a_1 - \eta \left[\frac{3}{2} \zeta a_1^2 +  (-1)^{t_1} a_1 y_1 - 1\right] \, .
\end{equation}
Consider the above equation as a difference equation for $x_2$ in $t_1$, with fixed $t_2$ and~$t_3$. The right-hand side depends on $t_1$ only through the $(-1)^{t_1}$ terms, as the terms $a_1$ and~$y_1$ were shown to be independent of $t_1$. This suggests considering the following lemma. 

\begin{lem}
    \label{lem:secular}
    Let $\alpha, \beta \in \R$. The solutions $\varphi = \varphi(t_1)$ of the difference equation
    \begin{equation*}
        \Delta_{t_1} \varphi + 2 \varphi = \alpha + \beta (-1)^{t_1} 
    \end{equation*}
    are 
    \begin{equation*}
        \varphi = \frac{\alpha}{2} + a(-1)^{t_1} - \beta t_1 (-1)^{t_1} \, , \qquad a \in \R \, .
    \end{equation*}
    In particular, there exists a bounded solution if and only if $\beta = 0$.  
\end{lem}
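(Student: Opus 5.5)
The equation is a first-order linear difference equation with constant coefficients. Written out, it reads $\varphi(t_1+1) = -\varphi(t_1) + \alpha + \beta(-1)^{t_1}$. The plan is the standard one: find a particular solution, add the general solution of the homogeneous equation, and then read off boundedness.

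\textbf{Homogeneous part.} The equation $\Delta_{t_1}\varphi + 2\varphi = 0$ means $\varphi(t_1+1) = -\varphi(t_1)$. Its solutions are exactly $a(-1)^{t_1}$ with $a = \varphi(0) \in \R$. This family is one-dimensional, since a solution is determined by its initial value.

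\textbf{Particular solutions.} By linearity we treat the two forcing terms separately.
\begin{itemize}
\item For the constant forcing $\alpha$, the constant $\varphi = \alpha/2$ works, because $\Delta_{t_1}\varphi = 0$.
\item For the forcing $\beta(-1)^{t_1}$, the ansatz $c\,t_1(-1)^{t_1}$ is needed, since $(-1)^{t_1}$ already solves the homogeneous equation. This resonance is the only point needing care.
\end{itemize}
For the resonant term we compute directly:
\[
\Delta_{t_1}\bigl(t_1(-1)^{t_1}\bigr) = (t_1+1)(-1)^{t_1+1} - t_1(-1)^{t_1} = -(2t_1+1)(-1)^{t_1}.
\]
Adding $2t_1(-1)^{t_1}$ gives $-(-1)^{t_1}$, so we need $c = -\beta$.

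\textbf{General solution.} Summing the pieces, every solution has the form $\alpha/2 + a(-1)^{t_1} - \beta t_1(-1)^{t_1}$ with $a\in\R$. Conversely, the difference of any two solutions solves the homogeneous equation, so this list is complete.

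\textbf{Boundedness.} If $\beta = 0$, each solution is bounded, taking only the two values $\alpha/2 \pm a$. If $\beta \neq 0$, then $|\varphi(t_1)| \geq |\beta| t_1 - |a| - |\alpha|/2 \to \infty$, so no solution is bounded.

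The main obstacle is only the resonant forcing term, and the explicit computation above settles it.
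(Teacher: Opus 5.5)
Your proof is correct, but it takes a different route from the paper. The paper proves this lemma as the special case $\gamma = 0$ of Lemma~\ref{lem:secular-general}, using variation of constants. It writes $\varphi = A(-1)^{t_1}$, which loses no generality since $(-1)^{t_1}\neq 0$. This turns $\Delta_{t_1}\varphi + 2\varphi$ into $-(-1)^{t_1}\Delta_{t_1}A$, so every solution comes from a discrete primitive of $-\alpha(-1)^{t_1} - \beta - (-1)^{t_1}\gamma$. The secular term $-\beta t_1$ then appears simply as the primitive of a constant.

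You instead use superposition with undetermined coefficients: the homogeneous solutions $a(-1)^{t_1}$, the constant particular solution $\alpha/2$, and the resonant ansatz $c\,t_1(-1)^{t_1}$. Completeness follows because the difference of two solutions solves the homogeneous equation. Your route is elementary and makes both the resonance mechanism and the boundedness dichotomy explicit; the paper leaves the latter implicit.

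The cost is that your route needs a guessed form of particular solution for each forcing term. The paper's substitution handles an arbitrary forcing in one stroke. That is exactly what the general-codimension derivation requires, where the summable transient $\gamma(t_1)$ coming from $x_1^\perp$ admits no closed-form ansatz.
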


\begin{proof}
    This lemma is a particular case of Lemma~\ref{lem:secular-general} below.
\end{proof}

Applying this lemma to Eq.~\eqref{eq:x2}, we obtain that we must have $\partial_{t_2} a_1 = \eta y_1 a_1$ to avoid a secular term. The lemma also gives an expression for $x_2$, but it turns out that we do not need it in this section. This is typical of the method of multiple scales: we solve higher-order terms not always to include them in our approximation, but to obtain information about lower-order terms---here, the variations of $a_1$ in $t_2$.

We now turn to the $y$ equation \eqref{eq:y}. It gives
\begin{equation*}
    \Delta_{t_1} y_2 + \partial_{t_2} y_1 + \partial_{t_3} y_0 = - \eta \left[\frac{x_1^2}{2} - 1\right] \, .
\end{equation*}
As before, we consider this as a difference equation for $y_2$ in $t_1$: 
\begin{equation*}
    \Delta_{t_1} y_2  = - \partial_{t_2} y_1 - \partial_{t_3} y_0- \eta \left[\frac{x_1^2}{2} - 1\right] \, .
\end{equation*}
We distinguish whether we are at the edge of stability. 
\begin{itemize}
    \item If $\eta y_0 < 2$, then $x_1 \to 0$ as $t_1 \to \infty$, and all other quantities of the right-hand side are independent of $t_1$. To prevent a secular term, we must have 
    \begin{equation*}
        - \partial_{t_2} y_1 - \partial_{t_3} y_0 + \eta = 0 \, .
    \end{equation*}
    We consider this as an ODE for $y_1$ in $t_2$. As $y_0$ is independent of $t_2$, we have a secular term unless 
    \begin{equation*}
        \partial_{t_3} y_0 = \eta \, . 
    \end{equation*}
This finishes our derivation of item \ref{it:interior} of the result: in the relative interior of the stable region, the dynamics follow the gradient flow of $h$ constrained to the valley.
    \item If $\eta y_0 = 2$, then we have $x_1 = (-1)^{t_1} a_1(t_2, t_3)$ and $\partial_{t_3} y_0 = 0$. We obtain
    \begin{equation*}
        \Delta_{t_1} y_2  = - \partial_{t_2} y_1 - \eta \left[\frac{a_1^2}{2} - 1\right] \, .
    \end{equation*}
    The right-hand side is independent of $t_1$. To avoid a secular term, we must have 
    \begin{equation*}
        \partial_{t_2} y_1 = \eta \left(1 - \frac{a_1^2}{2}\right) \, .
    \end{equation*}
    This completes the derivation of item \ref{it:eos} of the result. 
\end{itemize}
\end{proof}

\begin{remark}
    \label{rmk:cf-failure}
    Without making the assumption $\eta y_0 \leq 2$, we would have considered a third case $\eta y_0 > 2$ in the derivations of order $\varepsilon^{1/2}$. In this case, $x_1$ develops a secular term (exponentially growing), unless $x_1 = 0$. Thus, to avoid a secular term, we must have the dichotomy: $\eta y_0 \leq 2$ or $x_1 = 0$. It is not excluded that we leave the stable valley, however this can be done only in a way where the perturbation $x_1$ of $x_0 = 0$ is exactly $0$. This situation seems non-generic; in our case the term $-x$ of $h$ maintains a small drift from $x = 0$ that prevents this situation. In Fig.~\ref{fig:minimal-example-4}, we observe that removing the term $-x$ from $h$ results in the iterates leaving the stable valley for a significant slow time $t_3$, thus breaking the central flow prediction. More generally, we conjecture that a perturbation in $x$ (transversal gradient of $h$, noise, numerical errors, etc.) is necessary for the central flow prediction to hold. 
\end{remark}

\begin{figure}[ht]
    \centering
        \includegraphics[width=\linewidth]{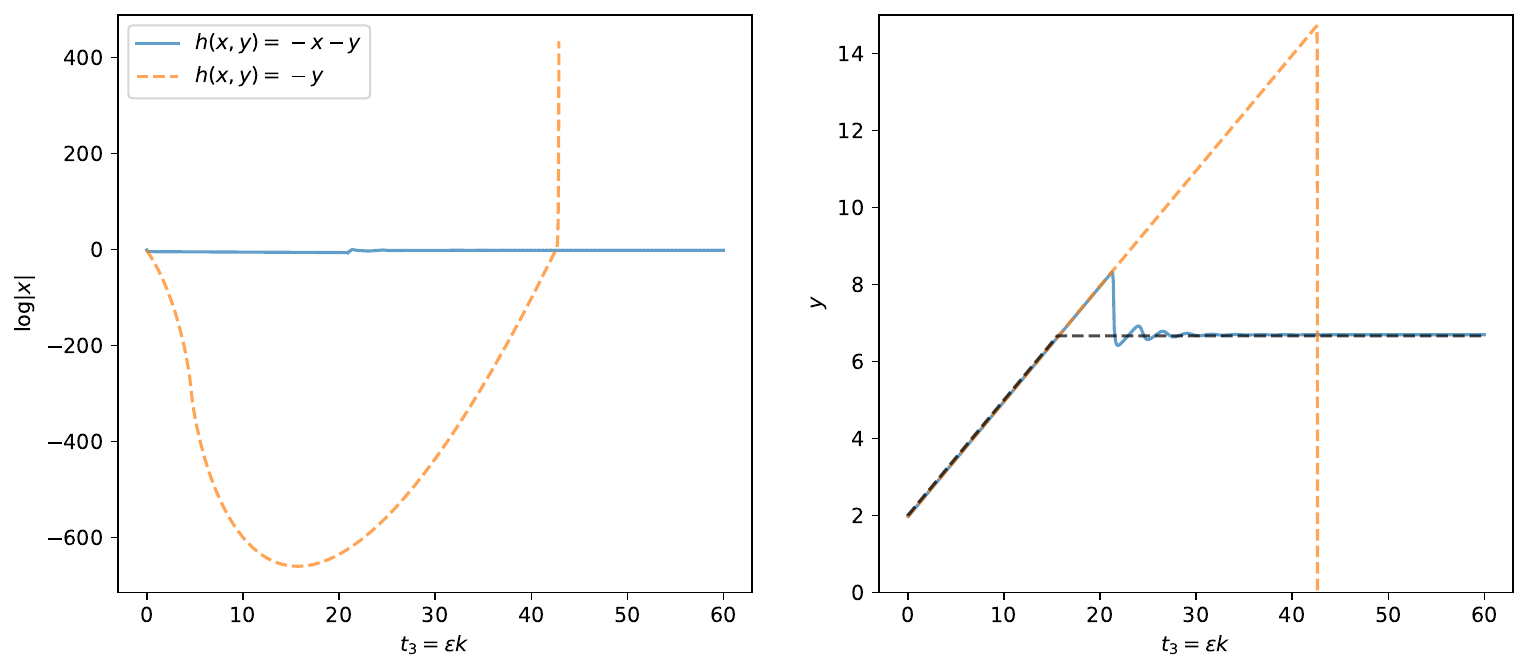}
    \caption{Iterates of gradient descent where $g$ is chosen as in Eqs.~\eqref{eq:minimal} and $h(x,y) = -x-y$ or $h(x,y)= -y$. Here, $\eta = 0.3$, $\zeta = 1$ and $\varepsilon = 0.03$. In the absence of the term $-x$ in $h$, $x$ becomes exponentially small in the first part of the dynamics, and thus it is possible to leave the stable valley, with the self-stabilization mechanism kicking in only after a significant slow time $t_3$. The self-stabilization mechanism is then so strong that the trajectory diverges. }
    \label{fig:minimal-example-4}
\end{figure}

\subsection{Energy variations in self-stabilization}
\label{sec:energy}

We have seen above that the self-stabilization dynamics preserve the energy 
\begin{equation*}
    E =E(t_3)= \frac{y_1^2}{2} + \frac{a_1^2}{4} - \frac{1}{2} \log a_1^2 \, ,
\end{equation*}
as a function of $t_2$. However, the energy still varies as a function $t_3$, as observed in Figs.~\ref{fig:minimal-example-2} and \ref{fig:minimal-example-3}. The method of multiple scales allows us to study these variations.

\begin{result}
    \label{result:energy-variations}
    Under the assumptions of Result~\ref{result:minimal}, at the edge of stability, we have 
    \begin{equation*}
        \partial_{t_3} E = \eta^2 (2 - 9 \zeta^2) \left\langle y_1^2 \right\rangle_{t_2} \, ,
    \end{equation*}
    where $\left\langle y_1^2 \right\rangle_{t_2} = \left\langle y_1^2(t_2, t_3) \right\rangle_{t_2}$ denotes the average of $y_1^2$ over one period in $t_2$.
\end{result}

Note that $\left\langle y_1^2 \right\rangle_{t_2}$ is a function of $E$ only, thus the above result is an autonomous ODE for $E$ in $t_3$. If $\zeta^2 > \frac{2}{9}$, the energy decreases while if $\zeta^2 < \frac{2}{9}$, the energy increases, see Fig.~\ref{fig:energy}. However, in the latter case, the energy does not diverge but saturates at a large value. In fact, $E$ becomes so large that the scales of $\varepsilon^{1/2}a_1, \varepsilon^{1/2}y_1$ are no longer of order $\varepsilon^{1/2}$, breaking the asymptotic expansion ansatz used to derive Result~\ref{result:energy-variations}. Our derivations are then no longer predictive of the actual dynamics and more refined computations would be required to study the saturation of the energy. 

\begin{figure}[ht]
    \centering
    \begin{subfigure}[c]{0.49\textwidth}
        \centering
        \includegraphics[width=\linewidth]{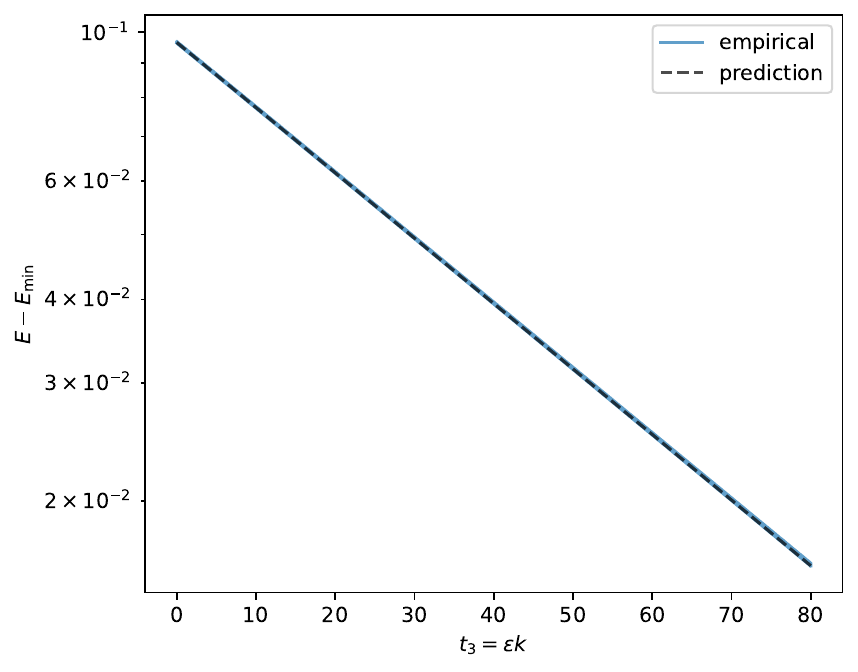}
        \caption{$\zeta = 0.5$}
    \end{subfigure}
    \hfill
    \begin{subfigure}[c]{0.49\textwidth}
        \centering
        \includegraphics[width=\linewidth]{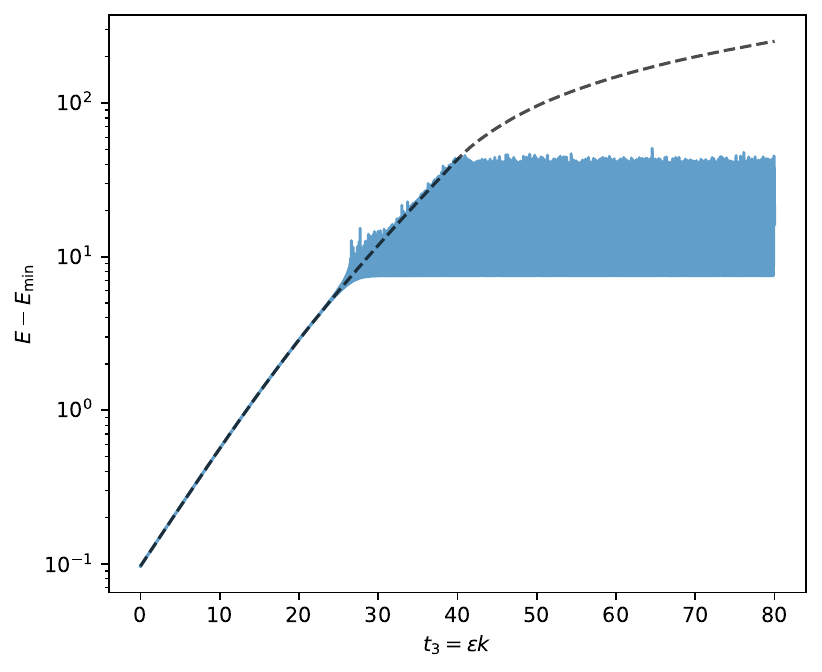}
        \caption{$\zeta = 0$}
    \end{subfigure}
    \caption{Comparison between the empirical variation of the energy and the theoretical prediction of Result~\ref{result:energy-variations}. This experiment is initialized at the edge of stability. Here, $\eta = 0.3$ and $\varepsilon = 10^{-5}$. More precisely, we define $y_1^{\text{emp}} = \varepsilon^{-1/2} (y - y_0)$ and $a_1^{\text{emp}} = \varepsilon^{-1/2} (-1)^{t_1}(x - x_0)$. This defines an empirical energy $E^{\text{emp}} = (y_1^{\text{emp}})^2/2 + (a_1^{\text{emp}})^2/4 - \log((a_1^{\text{emp}})^2)/2$, that we compare against $E^{\text{pred}}(t_3) = E^{\text{emp}}(0) + \eta^2 (2 - 9 \zeta^2) \int_{0}^{t_3} (y_1^{\text{emp}})^2 ds_3$. We denote $E_{\min} = \frac{1}{2}(1 - \log 2)$ the minimal possible energy.}
    \label{fig:energy}
\end{figure}

To summarize, the self-stabilization system~\eqref{eq:self-stab-minimal} and Result~\ref{result:energy-variations} should be understood as an analysis at energy levels of order $1$. This analysis is most relevant for practice since neural networks seem to systematically dissipate energy when only one eigenvalue is on the edge of stability, see Fig.~\ref{fig:nn} of this paper or Figs.~5, 6, 10, 14, 15 in~\cite{cohen2024understanding}. We leave it as an open problem to understand the structure of the loss of neural networks that causes this mechanism. 

\begin{figure}[ht]
    \centering
    \includegraphics[width=0.5\linewidth]{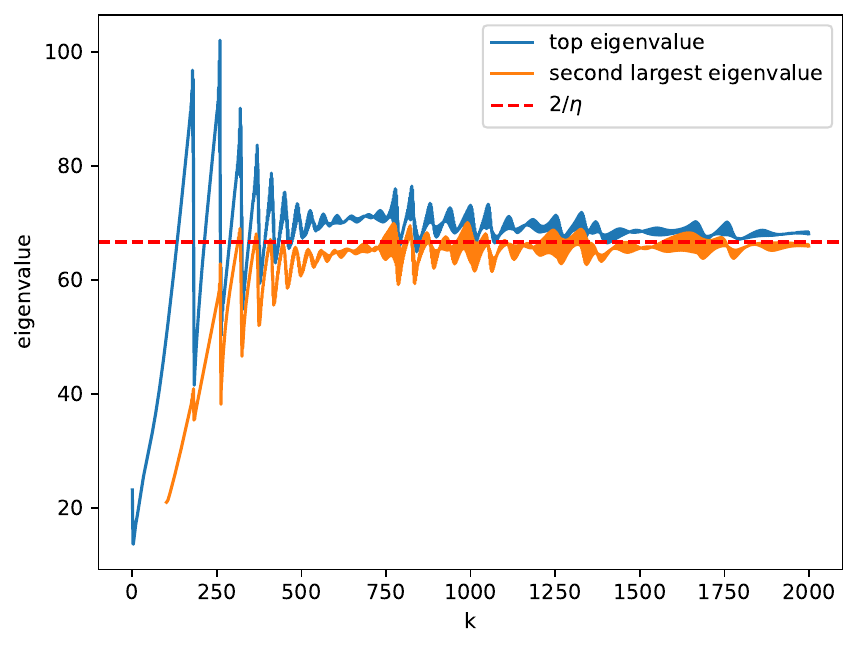}
    \caption{Evolution of the top eigenvalues of the Hessian along the training path of a neural network. The amplitude of the self-stabilization fluctuations decreases when only one eigenvalue is on the edge of stability. When a second eigenvalue reaches the edge of stability, the amplitude of the fluctuations stabilizes at a larger value, as expected from the analysis of Sec.~\ref{sec:general}. Here, we train a multilayer perceptron with two hidden layers of width 256 and GELU activation on a subset of CIFAR-10 with mean-squared error loss. The learning rate is $\eta = 0.03$.}
    \label{fig:nn}
\end{figure}

Analytically, studying the dependence of $E$ in $t_3$ requires pushing the method of multiple scales to the order $\varepsilon^{3/2}$. Indeed, as the reader might have noticed, the order $1$ gives the dependence of $z_0$ in $t_1$; the order $\varepsilon^{1/2}$ gives the dependencies of $z_1$ in $t_1$ and of $z_0$ in $t_2$; the order $\varepsilon$ gives the dependencies of $z_2$ in $t_1$, of $z_1$ in $t_2$, and of $z_0$ in $t_3$. Continuing this pattern, the dependence of $z_1$ (and thus of $E$) in $t_3$ is obtained at order~$\varepsilon^{3/2}$. This leads to rather tedious computations provided in Appendix \ref{sec:proof-energy-variations}. We have done these derivations only in the minimal example of this section and leave the more general case of Sec.~\ref{sec:codim-one} for future work.

\section{Valleys of codimension one}
\label{sec:codim-one}

We now return to the general setup of Sec.~\ref{sec:setting}. Here, we are interested in the fact that $Y$ can be multidimensional, thus $y_0$ can move at the edge of stability. However, in this section, we still assume that $X = \R$, so that $\nabla^2g(0,y)$ has exactly one non-zero eigenvalue $S(0,y) = \partial^2_{x} g(0,y)$. In Figure~\ref{fig:codim-one-proposal}, we provide an example of such gradient descent dynamics where the valley is of dimension $2$. 

We now state a general result for such valleys of codimension $1$. This result is an extension of Result \ref{result:conv-cf} and a generalization of Result~\ref{result:minimal}. 

\begin{result}
    \label{result:codim-one}
    Assume that $X$ is of dimension $1$. Further, assume that $(x_0, y_0)$ remains in the stable valley, i.e., $x_0 = 0$ and $\eta S(0,y_0) \leq 2$. Then $y_0 = y_0(t_3)$, and $y_1 = y_1(t_2, t_3)$. Further, we have the following:
\begin{enumerate}[label=(\roman*)]
    \item\label{it:codim-one-1} There exists $\sigma^2 = \sigma^2(t_3) \geq 0$ such that 
    \begin{align*}
        &\partial_{t_3} y_0 = - \eta \left[\nabla_y h(0, y_0) + \frac{\sigma^2}{2} \nabla_y S(0, y_0)\right] \, , 
    \end{align*}
    where for all $t_3$, $\sigma^2(t_3) = 0$ or $\eta S(0, y_0(t_3)) = 2$.
    \item\label{it:codim-one-2} If $\eta S(0, y_0) = 2$ (edge of stability), then $x_1 = (-1)^{t_1} a_1$ for some $a_1 = a_1(t_2, t_3)$. Further, denote $u_1 = \langle y_1, \nabla_y S(0, y_0) \rangle$. Then we have the self-stabilization system
    \begin{align}
        \label{eq:self-stab-codim-one}
        \begin{split}
        \partial_{t_2} a_1 &= \eta a_1 u_1 \, , \\
        \partial_{t_2} u_1 &= - \eta \left[\left\langle \nabla_y h(0, y_0), \nabla_y S(0, y_0) \right\rangle + \Vert \nabla_y S(0, y_0) \Vert^2 \frac{a_1^2}{2} \right] \, .
        \end{split}
    \end{align}
    Moreover, in that case, we have $\sigma^2 = \langle a_1^2 \rangle_{t_2}$, i.e.~$\sigma^2(t_3)$ is the average of $a_1^2(t_2, t_3)$ over a period in $t_2$.
    \end{enumerate}
    \end{result}

    The derivation of this result is provided in Appendix~\ref{sec:derivation-codim-one}. Note that in the self-stabilization system~\eqref{eq:self-stab-codim-one}, $\left\langle \nabla_y h(0, y_0), \nabla_y S(0, y_0) \right\rangle$ and $\Vert \nabla_y S(0, y_0) \Vert^2$ are independent of $t_2$. Moreover, in a situation where the dynamics stay at the edge of stability, we must have $\left\langle \nabla_y h(0, y_0), \nabla_y S(0, y_0) \right\rangle < 0$: the gradient flow on $h$ pushes towards the stability boundary. The self-stabilization system~\eqref{eq:self-stab-codim-one} is thus the same as the one of \eqref{eq:self-stab-minimal}, up to a rescaling. A self-stabilization loop appears, preserving an energy $E = E(t_3)$. For fixed $t_3$, $(a_1, u_1)$ is periodic in $t_2$, thus the average $\langle a_1^2 \rangle_{t_2}$ over $t_2$ is well-defined. 

    The derivation of the central flow in \cite{cohen2024understanding} already interpreted $\sigma^2 = \E x^2 $, where $\E$ denotes ``local time-averages''. Here, thanks to the timescale decoupling, we can be more precise: $\E$ corresponds to averaging over $t_2$, for fixed $t_3$.

\section{Valleys of arbitrary codimension}
\label{sec:general}

This section tackles the case where the dimension of $X$ is arbitrary. 

\begin{result}
    \label{result:general}
    Assume that $(x_0, y_0)$ remains in the stable valley, i.e., $x_0 = 0$ and $\eta S(0,y_0) \preceq 2 \Id_X$. Then $y_0 = y_0(t_3)$, and $y_1 = y_1(t_2, t_3)$. Further, we have the following:
    \begin{enumerate}[label=(\roman*)]
        \item\label{it:general-1} There exists $\Sigma = \Sigma(t_3) \in \Sym_+ X$ such that 
        \begin{equation*}
            \partial_{t_3} y_0 = - \eta \left[\nabla_y h(0, y_0) + \frac{1}{2} D_y S(0, y_0)^*[\Sigma]\right] \, ,
        \end{equation*}
        where for all $t_3$, $\langle \Sigma(t_3), 2 \Id_X - \eta S(0,y_0)\rangle = 0$. 
        \item Let $C = C(t_3) = \ker(2 \Id_X - \eta S(0, y_0)) \subset X$ denote the critical subspace. Recall that $\iota_C : C \to X$ denotes the canonical inclusion, and $P_C : X \to C$ the orthogonal projection onto $C$. We have $P_C x_1 = (-1)^{t_1} a_1$ for some $a_1 = a_1(t_2, t_3) \in C$ and $P_{C^\perp}x_1 \xrightarrow[t_1 \to \infty]{} 0$. Further, denote 
        \begin{align*}
            &\Psi = \Psi(t_3): Y \to \Sym C \\
            &\hspace{5.5em} \delta y \mapsto \Psi[\delta y] = P_C D_y S(0, y_0)[\delta y] \iota_C \, ,
        \end{align*}
        and $U_1 = \Psi[y_1] \in \Sym C$. Then we have the self-stabilization system
        \begin{align}
            \label{eq:self-stab-general}
            \begin{split}
            \partial_{t_2} a_1 &= \eta U_1 a_1 \, , \\
            \partial_{t_2} U_1 &= - \eta \left[ \Psi[\nabla_y h(0, y_0)] + \frac{1}{2} \Psi\Psi^*[a_1 a_1^\top ]\right] \, 
            \end{split}
        \end{align}
        Moreover, provided that the average $\langle a_1 a_1^\top \rangle_{t_2} = \lim_{T_2 \to \infty} \frac{1}{T_2} \int_0^{T_2} a_1(t_2, t_3)a_1(t_2, t_3)^\top \diff t_2$ is well-defined, we have $\Sigma = \iota_C \langle a_1 a_1^\top \rangle_{t_2} P_C$. 
    \end{enumerate}
\end{result}

The derivation of this result is provided in Appendix~\ref{sec:derivation-general}. The derivation of the central flow in \cite{cohen2024understanding} already interpreted $\Sigma = \E aa^\top $, where $\E$ denotes ``local time-averages''. Here, thanks to the timescale decoupling, we can be more precise: $\E$ corresponds to averaging over $t_2$, for fixed $t_3$.

Result \ref{result:general} is a generalization of Result~\ref{result:codim-one}. When only one eigenvalue of $S(0,y_0)$ has reached the edge of stability, we recover the self-stabilization system~\eqref{eq:self-stab-codim-one}. Indeed, then $C$ is of dimension $1$, thus $a_1 \in C$ and $U_1 \in \Sym C$ are scalars. Moreover, $\Psi\Psi^*: \Sym C \to \Sym C$ is a linear operator on a one-dimensional space, thus it is a scalar multiplication. Thus in that case, the self-stabilization system~\eqref{eq:self-stab-general} reduces to the self-stabilization system~\eqref{eq:self-stab-codim-one}. We have a periodic behavior in $t_2$. 

When more eigenvalues enter the edge of stability, the behavior of the self-stabilization system~\eqref{eq:self-stab-general} is much richer, see Fig.~\ref{fig:general}. 

The self-stabilization system~\eqref{eq:self-stab-codim-one} admits the fixed point $u_1 = 0, a_1^2 = -\frac{2 \langle \nabla_y h, \nabla_y S \rangle}{\Vert \nabla_y S \Vert^2}$. Meanwhile, finding a fixed point for the self-stabilization system~\eqref{eq:self-stab-general} would require solving the equation $\Psi\Psi^*[a_1 a_1^\top] = - 2 \Psi[\nabla_y h(0, y_0)]$. As the right hand side is in $\im \Psi = \im \Psi\Psi^*$, there exists $\widetilde{\Sigma} \in \Sym C$ such that $\Psi\Psi^*[\widetilde{\Sigma}] = -2 \Psi[\nabla_y h(0, y_0)]$. However, it is non-generic that a rank-one solution $\widetilde{\Sigma}$ exists. Thus, generically, there is no fixed point, and to avoid a secular term, $a_1 a_1^\top$ must evolve persistently so that its average value is $\widetilde{\Sigma}$.

This explains the observed behavior in Fig.~\ref{fig:general}. When only one eigenvalue is on the edge of stability, the self-stabilization system is observed to lose energy (on the $t_3$ timescale), converging to its fixed point. However, when a second eigenvalue enters the edge of stability, there is a significant increase in the edge of stability fluctuations, and these fluctuations are persistent. This phenomenology, here illustrated on a toy example, has been observed systematically in deep learning experiments, see Fig.~\ref{fig:nn} of this paper or Figs.~5, 6, 10, 14, 15 in~\cite{cohen2024understanding}.

To finish, we now discuss the implications of Result~\ref{result:general} for the loss $f(x,y)$, as it is the most commonly observed quantity in practice. 
    
    \begin{coro}
        \label{coro:general}
        Under the assumptions of Result~\ref{result:general}, we have
        \begin{equation*}
            f(x,y) = \min g + \varepsilon f_2 + o(\varepsilon) 
        \end{equation*}
        where 
        \begin{equation*}
            f_2 \xrightarrow[t_1 \to \infty]{} h(0, y_0) + \frac{\Vert a_1 \Vert^2}{\eta} \, .
        \end{equation*}
    \end{coro}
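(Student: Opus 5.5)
We will Taylor-expand $f = g + \varepsilon h$ around $z_0 = (0, y_0)$, using the expansion $z = z_0 + \varepsilon^{1/2} z_1 + \varepsilon z_2 + o(\varepsilon)$ from the method of multiple scales. Since $z_0 \in V$, we have $g(z_0) = \min g$ and $\nabla g(z_0) = 0$. Moreover, $\nabla^2 g(z_0)$ is block diagonal with only the $XX$ block $S(0,y_0)$ nonzero: $g$ is constant on the linear space $V$, so all $y$-derivatives of $g$ vanish at $z_0$, including the mixed $xy$ second derivatives. Hence
\begin{equation*}
    f(z) = \min g + \varepsilon \left[ \frac12 \langle x_1, S(0,y_0) x_1\rangle + h(0,y_0) \right] + o(\varepsilon) \, .
\end{equation*}
The $\varepsilon^{1/2}$ term $\langle \nabla g(z_0), z_1\rangle$ vanishes. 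The cross term between $z_0$ and $z_2$ is also absent at order $\varepsilon$, because it is multiplied by $\nabla g(z_0)=0$. Finally, $\varepsilon h(z)=\varepsilon h(z_0)+O(\varepsilon^{3/2})$. This identifies
\begin{equation*}
    f_2 = \frac12 \langle x_1, S(0,y_0)x_1\rangle + h(0,y_0) \, .
\end{equation*}

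It then remains to compute the limit as $t_1\to\infty$ using Result~\ref{result:general}. Decompose $x_1 = \iota_C P_C x_1 + \iota_{C^\perp}P_{C^\perp}x_1$. The subspace $C$ is an eigenspace of the symmetric operator $S(0,y_0)$, so $C^\perp$ is invariant under it and the cross terms in the quadratic form vanish. The $C^\perp$ part tends to $0$ as $t_1\to\infty$ by Result~\ref{result:general}, and its quadratic contribution therefore vanishes as well. On $C$, $S(0,y_0)$ acts as $2/\eta$, and $P_C x_1 = (-1)^{t_1}a_1$. Therefore $\frac12\langle x_1,Sx_1\rangle \to \frac12\cdot\frac{2}{\eta}\Vert a_1\Vert^2 = \Vert a_1\Vert^2/\eta$, which gives the claimed limit. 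When $C=\{0\}$ (strict interior), $a_1=0$, consistent with the formula.

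The main point requiring care is justifying the vanishing of the mixed Hessian blocks $\nabla_x\nabla_y g$ and $\nabla_y^2 g$ on $V$. This follows from $g(0,y)=\min g$ for all $y$, hence $\nabla_y g(0,y)=0$ identically, together with $\nabla_x g(0,y)=0$ since $(0,y)$ is a minimizer, so differentiating both identities in $y$ gives zero. The second point is keeping track of which terms are genuinely $o(\varepsilon)$; in particular, the cubic terms of $g$ enter only at order $\varepsilon^{3/2}$. Everything else is bookkeeping within the formal expansion, with the same formal status as the earlier results.
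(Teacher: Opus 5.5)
Your proposal is correct and follows essentially the same route as the paper. You Taylor-expand $g$ to second order around $(0,y_0)$, use that $\nabla g$, $\nabla_x \nabla_y g$ and $\nabla_y^2 g$ vanish on the valley to identify $f_2 = h(0,y_0) + \frac{1}{2}\langle x_1, S(0,y_0) x_1\rangle$, and then pass to the limit via $x_1 = (-1)^{t_1}\iota_C a_1 + \iota_{C^\perp} x_1^\perp$ with $x_1^\perp \to 0$ and $S(0,y_0)$ acting as $2/\eta$ on $C$; the paper does exactly this, leaving implicit the cross-term detail you add (those terms would vanish in the limit anyway since $x_1^\perp \to 0$).
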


    The derivation of this corollary is provided in Appendix~\ref{sec:derivation-coro-general}. The dominant non-constant term of the loss is of order $\varepsilon$. In the relative interior of the stable valley, this term depends only on $t_3$ (up to a potential initial transient term in~$t_1$), through the component $h$ of the loss. At the edge of stability, an additional increase ${\Vert a_1 \Vert^2}/{\eta}$ appears, depending on both $t_2$ and $t_3$. In $t_2$, the self-stabilization fluctuations induce spikes in the loss. Averaging over these, and assuming that the average $\langle a_1 a_1^\top \rangle_{t_2}$ is well-defined, we obtain
    \begin{equation*}
        \langle f_2 \rangle_{t_2} \xrightarrow[t_1 \to \infty]{} h(0, y_0) + \frac{\Tr \Sigma}{\eta} \, ,
    \end{equation*}
    consistently with the result of \cite[Sec.~3.2.3]{cohen2024understanding}. These results are illustrated in Fig.~\ref{fig:loss}. 

    \begin{figure}[ht]
    \centering
        \includegraphics[width=\linewidth]{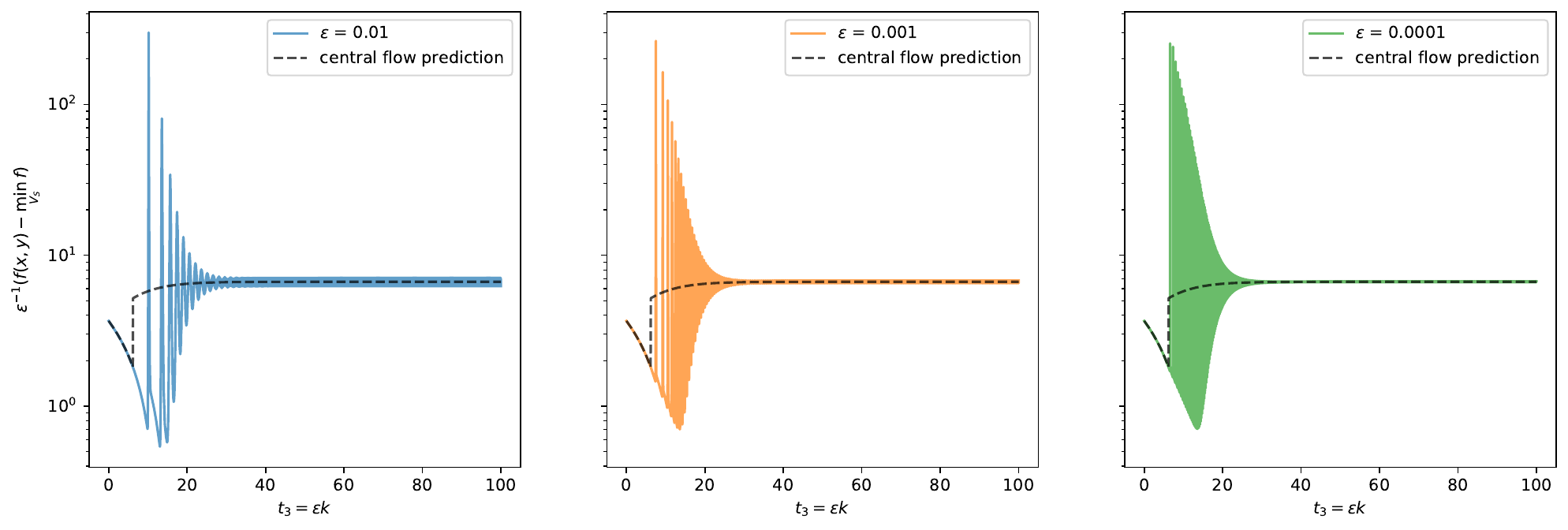}
    \caption{Rescaled loss $\varepsilon^{-1}\left(f(x(k),y(k)) - \min_{(x,y) \in V_{\mathrm{S}}} f(x,y)\right)$ along the iterates of gradient descent and its central flow prediction $h(0,y_0) - \min_{(x,y) \in V_{\mathrm{S}}} h(x,y)\min + \sigma^2/\eta$, in the same experiment as in Fig.~\ref{fig:codim-one-proposal}. The loss progress made on $h$ in the valley and the loss increase due to oscillations are both of the same order $\varepsilon$.}
    \label{fig:loss}
\end{figure}

\section{Related work}
\label{sec:related}

\paragraph{Sharp valley structure of the loss.} Arguments for the sharp valley structure of the loss landscape are widespread in the literature. To give a few examples, this picture is consistent with plots of the loss along the sharpest direction of the Hessian around an iterate \cite{jastrzkebski2018relation}, plots of the loss along the segment between two consecutive iterates \cite{xing2018walk}, and the anti-alignment of the gradients of consecutive iterates \cite{xing2018walk}. Along the training path, the loss Hessian exhibits an approximate low-rank structure \cite{sagun2016eigenvalues, sagun2017empirical, ghorbani2019investigation, pmlr-v97-papyan19a, papyan2020traces,ben2024high}. The gradients are mostly aligned with the sharpest directions of the Hessian \cite{gur2018gradient,li2020hessian}, however the component of the gradients in these dominant directions does not contribute to any loss decrease \cite{jastrzkebski2018relation,song2025does}, but to a sharpness reduction effect \cite{so2026mini}. In a toy problem, the valley structure has been mathematically analyzed and the sharpest directions are associated with uncertain features of the problem \cite{wen2024understanding}. Finally, given this accumulation of empirical evidence, the sharp valley structure has already been used as a working hypothesis in some works \cite{liu2025focus, liu2025neural}. 

\paragraph{Analyses of the edge of stability phenomenon.} After the initial empirical observations of the edge of stability phenomenon \cite{jastrzkebski2018relation,jastrzebski2020break,cohen2021gradient}, a large body of work has been devoted to explaining it \cite{ahn2022understanding,arora2022understanding,wang2022analyzing,damian2022self,zhu2022understanding,ma2022beyond,pmlr-v202-agarwala23b,chen2023beyond,kreisler2023gradient,song2023trajectory,wu2023implicit,wu2024large,cai2024large,wang2025good,kalra2025universal,ghosh2025learning,yoo2025understanding,liu2026minimalist, wu2026large,litman2026origin}. In contrast, our results \emph{assume} that the leading-term of dynamics remains in the stable region, and study the consequences of this assumption. To the best of our knowledge, no previous work had provided additional justification for the central flow approximation~\cite{cohen2024understanding}. 


From a modeling perspective, our work bears similarities with \cite{macdonald2026convergence,macdonald2026dynamics} that study the dynamics of gradient descent around a valley of minimizers. They show that gradient descent implicitly performs a Riemannian gradient descent on the sharpness in the valley of minimizers, and describe the bifurcating dynamical system in the orthogonal direction. In their ``subcritical regime'', their analysis corresponds to the case $h=0$ in our work. However, in the absence of $\nabla h$ pushing towards the stability boundary, the edge of stability phenomenon is transient.

\paragraph{Comparison with other approximations.} Several works \cite{regis2026rod,marion2026edge,de2026gradient} subsequent to the central flow \cite{cohen2024understanding} have proposed other averaged approximations of gradient descent, see Fig.~\ref{fig:comparison} for a comparison. In addition to the slow dynamics in $t_3$, these approximations attempt at capturing the self-stabilization dynamics in $t_2$. In fact, it is even explicit in the derivations of \cite{regis2026rod,marion2026edge} that they average only two subsequent iterates, thus over the timescale $t_1$ only. All of these approximations report similar or slightly better accuracy than the central flow in approximating gradient descent. While qualitatively, these approximations correctly generate self-stabilization loops, these loops are often off in magnitude or in phase, leading to only minor quantitative improvement. 

Overall, in the presence of the several timescales $t_1, t_2, t_3$, effective models can average at different levels, leading to different tradeoffs between accuracy and conceptual simplicity. The central flow, with its constrained gradient flow formulation, is the simplest and coarsest version, averaging over $t_1$ and $t_2$. The other approximations \cite{regis2026rod,marion2026edge,de2026gradient}, averaging over $t_1$ only, contain more information on the self-stabilization dynamics but have a more complex formulation. This paper advocates for seeking \emph{expansions} of the dynamics, where each individual term is simple and interpretable, while their combination can have high accuracy. 

\begin{figure}[ht]
    \centering
        \includegraphics[width=0.5\linewidth]{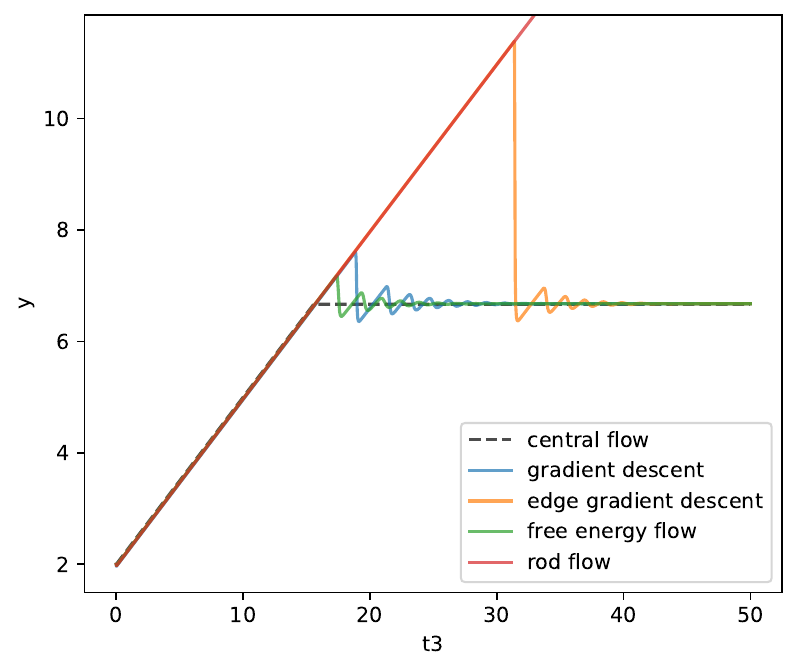}
    \caption{Comparison of the central flow \cite{cohen2024understanding}, the rod flow \cite{regis2026rod}, the edge gradient descent \cite{marion2026edge} and the flow of \cite{de2026gradient} (labeled ``free energy flow'' here) in the setting of Fig.~\ref{fig:minimal-example}. The fact that the rod flow may fail to detect the start of the edge of stability phase had already been pointed out in \cite[Fig.~6]{marion2026edge}.}
    \label{fig:comparison}
\end{figure}

\section{Conclusion}

The contributions of this paper are both conceptual and analytical. Conceptually, we clarify that the central flow and the self-stabilization mechanism are related to a sharp valley structure of the loss landscape. However, understanding why such a loss landscape structure appears when training deep learning models is still largely an open problem (see \cite{wen2024understanding} for a first approach).

Analytically, we provide a framework for studying the dynamics of gradient descent on the edge of stability, based on the method of multiple scales. This framework could be improved by formalizing the derivations through more rigorous techniques of singular perturbation theory. Computing the effect of a curved valley $V$ is also an interesting direction for future work. The works of \cite{macdonald2026convergence,macdonald2026dynamics} provide interesting first elements in these directions. 

The strength of the framework stems from its systematic nature. As a consequence, we plan to use it to analyze inertial and adaptive algorithms. We hope that understanding these complex dynamics across many optimizers will help to design new methods.

\section*{Acknowledgements}

The author thanks Victor Baillet, Léo Dana and Loucas Pillaud-Vivien for fruitful discussions. He also thanks the authors of \cite{cohen2024understanding} for making their code publicly available; in particular, this code was used to generate Figure~\ref{fig:nn}.

AI-based tools were used for line completion in the text editor, for generating the plotting scripts that produce the figures from the data, and for proofreading. They were not used to generate the scientific content of the paper, to draft its prose beyond phrase-level completion, or to write the code implementing the experiments. The author has verified all AI-assisted output and takes full responsibility for the content of the paper.

The author acknowledges support from the ANR and the Ministère de l’Enseignement Supérieur et de la Recherche.

\addcontentsline{toc}{section}{References}
\bibliographystyle{plain}
\bibliography{bibliography}

\newpage
\appendix

\vspace*{1cm}

\begin{center}
	\textsc{\Huge Appendix}	
\end{center}

\addtocontents{toc}{\protect\setcounter{tocdepth}{-1}}

\section{Derivation of Result~\ref{result:energy-variations}}
\label{sec:proof-energy-variations}

    We compute the terms of order $\varepsilon^{3/2}$ in the equation $\Delta_k z = - \eta \nabla f(z)$. For the right-hand side, we have
    \begin{align*}
    -\eta \nabla f(z) &= -\eta \nabla g(z) - \eta \varepsilon \nabla h(z) \nonumber\\
    &= -\eta \left[\nabla g\left(z_0 + \varepsilon^{1/2} z_1 + \varepsilon z_2 + \varepsilon^{3/2} z_3 + o(\varepsilon^{3/2})\right) + \varepsilon \nabla h\left(z_0 + \varepsilon^{1/2} z_1 + o(\varepsilon^{1/2})\right) \right]  \nonumber\\
    &= -\eta \bigg[\nabla g(z_0) + \nabla^2 g(z_0) \left(\varepsilon^{1/2} z_1 + \varepsilon z_2 + \varepsilon^{3/2} z_3\right) \\
    &\qquad\quad+ \frac{1}{2} \nabla^3 g(z_0) \left[\varepsilon^{1/2} z_1 + \varepsilon z_2 , \varepsilon^{1/2} z_1 + \varepsilon z_2 \right] \\
    &\qquad\quad+ \frac{1}{6} \nabla^4 g(z_0) \left[\varepsilon^{1/2} z_1, \varepsilon^{1/2} z_1, \varepsilon^{1/2} z_1 \right] \nonumber\\
    &\qquad\quad+ \varepsilon \nabla h (z_0) + \varepsilon \nabla^2h(z_0) \left(\varepsilon^{1/2}z_1\right) + o (\varepsilon^{3/2}) 
    \bigg] \nonumber\\
    &= -\eta \nabla g(z_0) -  \varepsilon^{1/2} \eta \nabla^2 g(z_0) z_1 - \varepsilon\eta \left[\nabla^2 g(z_0) z_2 + \frac{1}{2} \nabla^3 g(z_0) [z_1, z_1] + \nabla h(z_0)\right] \nonumber\\
    &\qquad
    - \varepsilon^{3/2}\eta \left[\nabla^2 g(z_0) z_3 + \nabla^3 g(z_0)[z_1, z_2] + \frac{1}{6} \nabla^4 g(z_0)[z_1, z_1, z_1] + \nabla^2h(z_0)z_1  \right]\\
    &\qquad+ o(\varepsilon^{3/2}) \, .
\end{align*}
Using the expressions for $g$ and $h$ of Eqs.~\eqref{eq:minimal} and the assumption $x_0 = 0$, we obtain 
\begin{align*}
    \begin{split}
    -\eta \partial_x f(x,y) &= - \varepsilon^{1/2} \eta y_0 x_1 - \varepsilon \eta \left[y_0 x_2 + \frac{3}{2} \zeta x_1^2 +  x_1 y_1 - 1\right] \\
    &\qquad- \varepsilon^{3/2} \eta \left[y_0 x_3 + 3 \zeta x_1 x_2 + x_1 y_2 + x_2 y_1\right]+ o(\varepsilon^{3/2}) \, , \\
    -\eta \partial_y f(x,y) &= - \varepsilon \eta \left[\frac{x_1^2}{2} - 1\right] - \varepsilon^{3/2} \eta x_1x_2 + o(\varepsilon^{3/2}) \, .
    \end{split}
\end{align*}
For the left-hand side, we first decompose 
\begin{align*}
    &z(t_1 , t_2 + \varepsilon^{1/2}, t_3 + \varepsilon) - z(t_1, t_2, t_3) \\
    &\qquad= \varepsilon^{1/2} \partial_{t_2} z + \varepsilon \partial_{t_3} z + \frac{\varepsilon}{2} \partial_{t_2}^2 z + \varepsilon^{3/2} \partial_{t_2} \partial_{t_3} z + \frac{\varepsilon^{3/2}}{6} \partial_{t_2}^3 z + o(\varepsilon^{3/2}) \, .
\end{align*}
Using a similar expansion for $(\Delta_{t_1} z)(t_1, t_2 + \varepsilon^{1/2}, t_3 + \varepsilon)$ and substituting in Eq.~\eqref{eq:aux-4}, we obtain
\begin{align*}
    \Delta_k z &= \Delta_{t_1} z + \varepsilon^{1/2} \partial_{t_2} z + \varepsilon \partial_{t_3} z + \frac{\varepsilon}{2} \partial_{t_2}^2 z + \varepsilon^{3/2} \partial_{t_2} \partial_{t_3} z + \frac{\varepsilon^{3/2}}{6} \partial_{t_2}^3 z \\
    &\qquad+ \varepsilon^{1/2} \partial_{t_2} \Delta_{t_1} z + \varepsilon \partial_{t_3} \Delta_{t_1}z + \frac{\varepsilon}{2} \partial_{t_2}^2 \Delta_{t_1}z + \varepsilon^{3/2} \partial_{t_2} \partial_{t_3} \Delta_{t_1}z + \frac{\varepsilon^{3/2}}{6} \partial_{t_2}^3 \Delta_{t_1}z +o(\varepsilon^{3/2}) \\
    &= \Delta_{t_1} z + \varepsilon^{1/2} \left[\partial_{t_2} z + \partial_{t_2} \Delta_{t_1} z\right] + \varepsilon \left[\partial_{t_3} z + \frac{1}{2} \partial_{t_2}^2 z + \partial_{t_3} \Delta_{t_1}z + \frac{1}{2} \partial_{t_2}^2 \Delta_{t_1}z\right] \\
    &\qquad +\varepsilon^{3/2} \left[ \partial_{t_2} \partial_{t_3} z + \frac{1}{6} \partial_{t_2}^3 z +  \partial_{t_2} \partial_{t_3} \Delta_{t_1}z + \frac{1}{6} \partial_{t_2}^3 \Delta_{t_1}z\right]+ o(\varepsilon^{3/2}) \, . 
\end{align*}
We combine the above decomposition with the expansion of $z$. We obtain 
\begin{align*}
    \begin{split}
    \Delta_k z &= \Delta_{t_1} z_0 + \varepsilon^{1/2} \left[\Delta_{t_1} z_1 + \partial_{t_2} z_0 +  \partial_{t_2} \Delta_{t_1}z_0 \right] \\
    &\quad+ \varepsilon \left[\Delta_{t_1} z_2 + \partial_{t_2} z_1 + \partial_{t_2} \Delta_{t_1} z_1 + \partial_{t_3} z_0 + \frac{1}{2} \partial_{t_2}^2 z_0  + \partial_{t_3} \Delta_{t_1} z_0 + \frac{1}{2} \partial_{t_2}^2 \Delta_{t_1} z_0\right] \\
    &\quad+ \varepsilon^{3/2} \bigg[ \Delta_{t_1}z_3 + \partial_{t_2} z_2 + \partial_{t_2} \Delta_{t_1} z_2 + \partial_{t_3} z_1 + \frac{1}{2} \partial_{t_2}^2 z_1 + \partial_{t_3} \Delta_{t_1}z_1 + \frac{1}{2} \partial_{t_2}^2 \Delta_{t_1}z_1 \\
    &\hspace{5em}+  \partial_{t_2} \partial_{t_3} z_0 + \frac{1}{6} \partial_{t_2}^3 z_0 +  \partial_{t_2} \partial_{t_3} \Delta_{t_1}z_0 + \frac{1}{6} \partial_{t_2}^3 \Delta_{t_1}z_0\bigg]+ o(\varepsilon^{3/2}) \, .
    \end{split}
\end{align*}
We are now ready to identify the terms of order $\varepsilon^{3/2}$. From Eq.~\eqref{eq:x}, we obtain
\begin{align*}
    &\Delta_{t_1} x_3 + \partial_{t_2} x_2 + \partial_{t_2} \Delta_{t_1} x_2 + \partial_{t_3} x_1 + \frac{1}{2} \partial_{t_2}^2 x_1 + \partial_{t_3} \Delta_{t_1}x_1 + \frac{1}{2} \partial_{t_2}^2 \Delta_{t_1}x_1 \\
    &\qquad= - \eta \left[y_0 x_3 + 3 \zeta x_1 x_2 + x_1 y_2 + x_2 y_1\right] \, .
\end{align*}
Here, we are interested in the edge of stability phase where $\eta y_0 = 2$, $x_1 = (-1)^{t_1}a_1(t_2, t_3)$. Moreover, applying Lemma~\ref{lem:secular} to Eq.~\eqref{eq:x2}, we have $x_2 = -\frac{\eta}{2} \left[\frac{3}{2}\zeta a_1^2-1\right]+(-1)^{t_1}a_2$ for some $a_2 = a_2(t_2, t_3)$. This gives
\begin{align*}
    \Delta_{t_1} x_3 + 2x_3 &= \frac{3\eta}{4} \zeta \partial_{t_2} \left(a_1^2\right) - (-1)^{t_1} \partial_{t_2} a_2 + 2(-1)^{t_1} \partial_{t_2} a_2  - (-1)^{t_1} \partial_{t_3} a_1 \\
    &\qquad- \frac{1}{2} (-1)^{t_1} \partial_{t_2}^2 a_1 + 2 (-1)^{t_1} \partial_{t_3} a_1 + (-1)^{t_1} \partial_{t_2}^2 a_1 \\
    &\qquad- \eta \bigg[3\zeta (-1)^{t_1} a_1 \left(-\frac{\eta}{2} \left[\frac{3}{2}\zeta a_1^2 - 1\right] + (-1)^{t_1}a_2\right) + (-1)^{t_1} a_1 y_2 \\
    &\qquad\qquad + \left(-\frac{\eta}{2} \left[\frac{3}{2}\zeta a_1^2 - 1\right] + (-1)^{t_1}a_2\right)y_1\bigg] \\
    &= \frac{3\eta}{4} \zeta \partial_{t_2} \left(a_1^2\right) + (-1)^{t_1} \partial_{t_2} a_2  + (-1)^{t_1} \partial_{t_3} a_1 + \frac{1}{2} (-1)^{t_1} \partial_{t_2}^2 a_1  \\
    &\qquad- \eta \bigg[3\zeta (-1)^{t_1} a_1 \left(-\frac{\eta}{2} \left[\frac{3}{2}\zeta a_1^2 - 1\right] + (-1)^{t_1}a_2\right) + (-1)^{t_1} a_1 y_2 \\
    &\qquad\qquad + \left(-\frac{\eta}{2} \left[\frac{3}{2}\zeta a_1^2 - 1\right] + (-1)^{t_1}a_2\right)y_1\bigg] \, . 
\end{align*}
The right hand side depends on $t_1$ only through the $(-1)^{t_1}$ terms. By Lemma~\ref{lem:secular}, to avoid a secular term, we must have
\begin{align}
    \label{eq:aux-a}
    \begin{split}
    \partial_{t_2} a_2 + \partial_{t_3} a_1 + \frac{1}{2} \partial_{t_2}^2 a_1 &= \eta \left(-3\zeta a_1 \frac{\eta}{2} \left[\frac{3}{2}\zeta a_1^2 - 1\right] + a_1 y_2 + a_2 y_1 \right) \\
    &= -\eta^2 \frac{3}{2} \zeta a_1 \left[\frac{3}{2}\zeta a_1^2 - 1\right] + \eta \left(a_1 y_2 + a_2 y_1 \right) \, .
    \end{split}
\end{align}
We now identify terms of order $\varepsilon^{3/2}$ in the $y$ equation~\eqref{eq:y}. Recall that $y_0 = 2/\eta$, $y_1 = y_1(t_2, t_3)$, $y_2 = y_2(t_2, t_3)$, leading to many simplifications. We obtain
\begin{align*}
    \Delta_{t_1} y_3 + \partial_{t_2} y_2 + \partial_{t_3} y_1 + \frac{1}{2} \partial_{t_2}^2 y_1 = - \eta x_1 x_2 = - \eta (-1)^{t_1} a_1 \left(-\frac{\eta}{2} \left[\frac{3}{2}\zeta a_1^2 - 1\right] + (-1)^{t_1}a_2\right) \, .
\end{align*}
We consider this as a difference equation for $y_3$ in $t_1$. \begin{equation*}
    \Delta_{t_1} y_3 = - \partial_{t_2} y_2 - \partial_{t_3} y_1 - \frac{1}{2} \partial_{t_2}^2 y_1 - \eta (-1)^{t_1} a_1 \left(-\frac{\eta}{2} \left[\frac{3}{2}\zeta a_1^2 - 1\right] + (-1)^{t_1}a_2\right) \, .
\end{equation*}
The right hand side depends on $t_1$ only through the $(-1)^{t_1}$ terms. To avoid a secular term, we must have 
\begin{align}
    \label{eq:aux-y}
    \partial_{t_2} y_2 + \partial_{t_3} y_1 + \frac{1}{2} \partial_{t_2}^2 y_1 &= - \eta a_1 a_2 \, .
\end{align}
Recall that we seek to compute 
\begin{equation*}
    \partial_{t_3} E = y_1 \partial_{t_3} y_1 + \left(\frac{a_1}{2}-\frac{1}{a_1}\right) \partial_{t_3} a_1 \, .
\end{equation*}
This suggests considering the quantity $F = y_1 y_2 + \left(\frac{a_1}{2}-\frac{1}{a_1}\right) a_2$. Indeed, using Eqs.~\eqref{eq:aux-a} and \eqref{eq:aux-y}, we have
\begin{align*}
    \partial_{t_2}F &= (\partial_{t_2} y_1) y_2 + y_1 (\partial_{t_2} y_2) + \left(\frac{1}{2} + \frac{1}{a_1^2}\right) (\partial_{t_2} a_1) a_2 + \left(\frac{a_1}{2}-\frac{1}{a_1}\right) (\partial_{t_2} a_2) \\
    &= \eta \left(1 - \frac{a_1^2}{2}\right) y_2 - y_1 \left(\partial_{t_3} y_1 + \frac{1}{2} \partial_{t_2}^2 y_1 + \eta a_1 a_2\right) + \left(\frac{1}{2} + \frac{1}{a_1^2}\right) (\eta a_1 y_1) a_2 \\
    &\qquad+ \left(\frac{a_1}{2}-\frac{1}{a_1}\right) \left(-\partial_{t_3} a_1 - \frac{1}{2} \partial_{t_2}^2 a_1 - \eta^2 \frac{3}{2} \zeta a_1 \left[\frac{3}{2}\zeta a_1^2 - 1\right] + \eta (a_1 y_2+ a_2 y_1)\right) \\
    &= -\partial_{t_3} E - \frac{y_1}{2} \partial_{t_2}^2 y_1 - \left(\frac{a_1}{2}-\frac{1}{a_1}\right) \frac{1}{2}\partial_{t_2}^2 a_1 - \eta^2 \frac{3}{2} \zeta \left(\frac{a_1^2}{2}-1\right)\left(\frac{3}{2}\zeta a_1^2 - 1\right) \, .
\end{align*}
The right-hand side depends on $t_2$ only through the functions $a_1$ and $y_1$, which are periodic in $t_2$. We then use the following lemma.
            
            \begin{lem}
                \label{lem:periodic}
                Let $t \mapsto \varphi(t)$ be a periodic real function. Then $\varphi(t)$ has a bounded primitive if and only if the average of $\varphi$ is $0$.
            \end{lem}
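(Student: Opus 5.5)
The plan is to reduce the statement to a comparison between a primitive and the linear function $t \mapsto \bar\varphi t$, where $\bar\varphi$ is the average. Let $T > 0$ be a period of $\varphi$, and assume $\varphi$ is locally integrable; in our application it is continuous, since it is built from the smooth periodic functions $a_1$ and $y_1$. Define $\bar\varphi = \frac{1}{T}\int_0^T \varphi(s)\,\diff s$ and consider the particular primitive $\Phi(t) = \int_0^t \varphi(s)\,\diff s$. Any other primitive differs from $\Phi$ by a constant, so it is enough to decide boundedness for $\Phi$.

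The key step is the decomposition $\Phi(t) = \bar\varphi\, t + \Psi(t)$, where $\Psi(t) = \int_0^t (\varphi(s) - \bar\varphi)\,\diff s$. We check that $\Psi$ is $T$-periodic:
\[
\Psi(t+T) - \Psi(t) = \int_t^{t+T} \varphi(s)\,\diff s - T\bar\varphi = 0,
\]
because the integral of a $T$-periodic function over any interval of length $T$ equals $\int_0^T \varphi$. Since $\Psi$ is continuous and periodic, it is bounded by $\max_{[0,T]} |\Psi|$.

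Both directions then follow. If $\bar\varphi = 0$, then $\Phi = \Psi$ is bounded. Conversely, if $\bar\varphi \neq 0$, then $|\Phi(t)| \geq |\bar\varphi|\,|t| - \sup|\Psi| \to \infty$, so $\Phi$ is unbounded, and so is every primitive $\Phi + c$. This gives the equivalence.

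No step here is a real obstacle. The only point that needs care is the shift-invariance of the integral over a period, which we obtain by splitting $\int_t^{t+T}$ at a multiple of $T$ and substituting $s \mapsto s - T$. In the application to $\partial_{t_2} F$, the function is periodic in $t_2$ at fixed $t_3$. The boundedness of $F$, which is the no-secular-term requirement, then forces its $t_2$-average to vanish.
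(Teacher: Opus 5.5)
Your proof is correct and complete. The paper states this lemma without proof, as a standard fact. Your decomposition $\Phi(t) = \bar\varphi\, t + \Psi(t)$, with $\Psi$ continuous and periodic and hence bounded, is exactly the standard argument the paper relies on implicitly. Your explicit assumption that $\varphi$ is locally integrable (continuous in the application) is in fact needed for the ``if'' direction, since that direction requires a primitive to exist; the paper leaves this implicit.
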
To avoid a secular term in $F$ (which would induce a secular term in $y_2$ or $a_2$), we must have 
\begin{align*}
    \partial_{t_3} E &= \left\langle -\frac{y_1}{2} \partial_{t_2}^2 y_1 + \left(\frac{1}{2a_1} - \frac{a_1}{4}  \right)\partial_{t_2}^2a_1- \eta^2 \frac{3}{2} \zeta \left(\frac{a_1^2}{2}-1\right)\left(\frac{3}{2}\zeta a_1^2 - 1\right)\right\rangle_{t_2} \, .
\end{align*}
We now simplify this expression. We compute 
\begin{align*}
    \partial_{t_2}^2 y_1 &= \eta \partial_{t_2} \left(1 - \frac{a_1^2}{2}\right) = - \eta a_1 \partial_{t_2} a_1 = - \eta^2 a_1^2 y_1 \, , \\
    \partial_{t_2}^2 a_1 &= \eta \partial_{t_2} (a_1 y_1) = \eta \left[(\partial_{t_2}a_1) y_1 + a_1 (\partial_{t_2} y_1)\right] = \eta^2 \left[a_1 y_1^2 + a_1 \left(1-\frac{a_1^2}{2}\right)\right] \, .
\end{align*}
This gives
\begin{align*}
    \partial_{t_3} E &= \eta^2 \left\langle \frac{a_1^2}{2} y_1^2 + \left(\frac{1}{2a_1} - \frac{a_1}{4}\right) \left[a_1 y_1^2 + a_1 - \frac{a_1^3}{2}\right] - \frac{3}{2} \zeta \left(\frac{a_1^2}{2}-1\right)\left(\frac{3}{2}\zeta a_1^2 - 1\right)\right\rangle_{t_2} \\
    &= \eta^2 \left\langle \frac{a_1^2}{2} y_1^2 + \frac{y_1^2}{2} + \frac{1}{2} - \frac{a_1^2}{4} -  \frac{a_1^2}{4} y_1^2 - \frac{a_1^2}{4} + \frac{a_1^4}{8} - \frac{9}{8} \zeta^2 a_1^4 + \frac{3}{4} \zeta a_1^2 + \frac{9}{4} \zeta^2 a_1^2 - \frac{3}{2} \zeta\right\rangle_{t_2} \\
    &= \eta^2 \left( \frac{\langle a_1^2 y_1^2 \rangle_{t_2}}{4} + \frac{\langle y_1^2 \rangle_{t_2}}{2} + \frac{1}{2} - \frac{3}{2} \zeta  + \left(-\frac{1}{2} + \frac{3\zeta}{4} + \frac{9\zeta^2}{4}\right) \langle a_1^2 \rangle_{t_2} + \frac{1}{8} \left(1 - 9 \zeta^2\right) \langle a_1^4 \rangle_{t_2} \right) \, .
\end{align*}
We now compute the average values that appear above. Note that, as $y_1$ is periodic in~$t_2$, 
\begin{equation*}
    0 = \langle \partial_{t_2} y_1 \rangle_{t_2} = \eta \left(1 - \frac{\langle a_1^2 \rangle_{t_2}}{2}\right) \, ,
\end{equation*}
thus $\langle a_1^2 \rangle_{t_2} = 2$. Similarly, 
\begin{equation*}
    0 = \left\langle \partial_{t_2} (y_1^3) \right\rangle_{t_2} = 3 \left\langle y_1^2 \partial_{t_2} y_1 \right\rangle_{t_2} = 3 \eta \left\langle y_1^2 \left(1 - \frac{a_1^2}{2}\right)\right\rangle_{t_2} = 3\eta \left(\left\langle y_1^2 \right\rangle_{t_2} - \frac{\left\langle y_1^2 a_1^2 \right\rangle_{t_2}}{2}\right) \, ,
\end{equation*}
thus $\langle y_1^2 a_1^2 \rangle_{t_2} = 2 \langle y_1^2 \rangle_{t_2}$. Finally, 
\begin{align*}
    0 &= \left\langle \partial_{t_2} (a_1^2 y_1) \right\rangle_{t_2} = \eta \left\langle 2 a_1 (\partial_{t_2} a_1) y_1 + a_1^2 \partial_{t_2} y_1 \right\rangle_{t_2} = \eta \left\langle 2 a_1^2y_1^2 + a_1^2 \left(1 - \frac{a_1^2}{2}\right) \right\rangle_{t_2} \\
    &= \eta \left[2 \left\langle a_1^2 y_1^2 \right\rangle_{t_2} + \left\langle a_1^2  \right\rangle_{t_2} - \frac{\left\langle a_1^4 \right\rangle_{t_2}}{2}\right] = \eta \left[4 \left\langle y_1^2 \right\rangle_{t_2} + 2 - \frac{\left\langle a_1^4 \right\rangle_{t_2}}{2}\right]\, ,
\end{align*}
thus $\langle a_1^4 \rangle_{t_2} = 4 + 8 \langle y_1^2 \rangle_{t_2}$. Overall, we obtain 
\begin{align*}
    \partial_{t_3} E &= \eta^2 \left[ \frac{\langle y_1^2 \rangle_{t_2}}{2} + \frac{\langle y_1^2 \rangle_{t_2}}{2} + \frac{1}{2} - \frac{3}{2} \zeta  -1 + \frac{3\zeta}{2} + \frac{9 \zeta^2}{2} + (1 - 9 \zeta^2) \left(\frac{1}{2} + \langle y_1^2 \rangle_{t_2}\right) \right] \\
    &= \eta^2 (2 - 9 \zeta^2) \langle y_1^2 \rangle_{t_2}  \, .
\end{align*}

\begin{remark}
    In the above derivations, ensuring that $F = y_1 y_2 + \left(\frac{a_1}{2}-\frac{1}{a_1}\right) a_2$ is non-secular suffices to compute the variations of the energy $E$. However, ensuring that $y_2$ and $a_2$ are both non-secular would require a condition on an independent combination of $y_2$ and $a_2$. This additional condition would compute the phase shift of the self-stabilization fluctuations in the slow timescale $t_3$. This phase shift can be observed in Fig.~\ref{fig:minimal-example-3}, bottom plots. 
\end{remark}

\section{Derivation of Result~\ref{result:codim-one}}
\label{sec:derivation-codim-one}

        As before, we identify terms in the expansion of 
        \begin{align}
            \Delta_k x &= -\eta \partial_x f(x,y) \, , \label{eq:x-codim-one} \\
            \Delta_k y &= -\eta \nabla_y f(x,y) \, . \label{eq:y-codim-one}
        \end{align}
        The expansion of the discrete difference operator $\Delta_k$ is the same as in Eq.~\eqref{eq:lhs}. The expansion of the right-hand side also remains unchanged until Eq.~\eqref{eq:rhs-general}. Recall that for all $y \in Y$, we have $g(0,y) = \min g$. In particular, this implies that for all $p \geq 1$, $\nabla_y^p g(0,y) = 0$. Moreover, it also implies that for all $y \in Y$, $\partial_x g(0,y) = 0$. As a consequence, for all $p \geq 0$, $\nabla_y^p \partial_x g(0,y) = 0$. This simplifies many terms in Eq.~\eqref{eq:rhs-general}. Combining with the assumption $x_0 = 0$ and the notation $S(0,y) = \partial^2_{x} g(0,y)$, we obtain
        \begin{align}
            \label{eq:aux-2}
            \begin{split}
            -\eta \partial_x f(x,y) &= - \varepsilon^{1/2} \eta S(0,y_0) x_1 \\
            &\quad- \varepsilon \eta \left[S(0,y_0) x_2 + \frac{1}{2} \partial_x S(0,y_0) x_1^2 + x_1 \left\langle \nabla_y S(0,y_0), y_1 \right\rangle + \partial_x h(0,y_0)\right] \\
            &\quad+ o(\varepsilon) \, , \\
            -\eta \nabla_y f(x,y) &= - \varepsilon \eta \left[\frac{x_1^2}{2}  \nabla_y S(0,y_0) + \nabla_y h(0,y_0)\right] + o(\varepsilon) \, .
            \end{split}
        \end{align}
        In the derivations that follow, we do not write explicitly the evaluations at $(0,y_0)$ anymore. We can now proceed to the identification of terms in the expansion of Eqs.~\eqref{eq:x-codim-one}--\eqref{eq:y-codim-one}.

        \paragraph{Order $1$.}
        As in the derivation of Result~\ref{result:minimal}, this order gives $y_0 = y_0(t_2, t_3)$.

        \paragraph{Order $\varepsilon^{1/2}$.}
        The $x$ equation \eqref{eq:x-codim-one} gives
        \begin{equation*}
            \Delta_{t_1} x_1 = - \eta S x_1 \, .
        \end{equation*}
        We distinguish whether we are at the edge of stability.
        \begin{itemize}
            \item If $\eta S < 2$, recall that, by assumption from Sec.~\ref{sec:setting}, we also have $\eta S > 0$. Thus $x_1 \to 0$ as $t_1 \to \infty$. 
            \item If $\eta S = 2$, then $x_1 = (-1)^{t_1} a_1(t_2, t_3)$ for some $a_1$. 
        \end{itemize}
        As before, the $y$ equation \eqref{eq:y-codim-one} gives $y_1 = y_1(t_2, t_3)$ and $y_0 = y_0(t_3)$.

        \paragraph{Order $\varepsilon$.} The $x$ equation \eqref{eq:x-codim-one} gives
        \begin{equation*}
            \Delta_{t_1} x_2 + \partial_{t_2} x_1 + \partial_{t_2} \Delta_{t_1} x_1 = - \eta \left[S x_2 + \frac{1}{2} (\partial_x S) x_1^2 + x_1 \left\langle \nabla_y S, y_1 \right\rangle + \partial_x h\right] \,.
        \end{equation*}
        Again, we focus on understanding what happens at the edge of stability, i.e., when $\eta S = 2$. We obtain
        \begin{equation*}
            \Delta_{t_1} x_2 +  2 x_2 = (-1)^{t_1} \partial_{t_2} a_1 - \eta \left[\frac{1}{2} (\partial_x S) a_1^2 +  (-1)^{t_1} a_1 \left\langle \nabla_y S, y_1 \right\rangle + \partial_x h\right] \, .
        \end{equation*}
        By Lemma~\ref{lem:secular}, we must have $\partial_{t_2} a_1 = \eta a_1 \left\langle \nabla_y S, y_1 \right\rangle$ to avoid a secular term.

        We now turn to the $y$ equation \eqref{eq:y-codim-one}. It gives
        \begin{equation*}
            \Delta_{t_1} y_2 + \partial_{t_2} y_1 + \partial_{t_3} y_0 = - \eta \left[\frac{x_1^2}{2} \nabla_y S + \nabla_y h\right] \, .
        \end{equation*}
        We distinguish whether we are at the edge of stability.
        \begin{itemize}
            \item If $\eta S < 2$, then $x_1 \to 0$ as $t_1 \to \infty$. To prevent $y_2$ from developing a secular term on the timescale $t_1$, we must have
                $\partial_{t_2} y_1 + \partial_{t_3} y_0 = - \eta \nabla_y h$.
            Here, all quantities but $y_1$ are independent of $t_2$. To prevent a secular term, we must have 
            \begin{equation*}
                \partial_{t_3} y_0 = - \eta \nabla_y h \, .
            \end{equation*}
            In the relative interior of the stable region, the dynamics follow the gradient flow of $h$ constrained to the valley.

            \item If $\eta S = 2$, then we have $x_1 = (-1)^{t_1} a_1(t_2, t_3)$. We obtain
            \begin{equation*}
                \Delta_{t_1} y_2  = - \partial_{t_2} y_1 - \partial_{t_3} y_0- \eta \left[\frac{a_1^2}{2} \nabla_y S + \nabla_y h\right] \, .
            \end{equation*}
            The right-hand side is independent of $t_1$. To avoid a secular term, we must have 
            \begin{equation}
                \label{eq:aux-1}
                \partial_{t_2} y_1 = - \partial_{t_3} y_0- \eta \left[\frac{a_1^2}{2} \nabla_y S + \nabla_y h\right] \, .
            \end{equation}
            Differentiating the edge of stability condition $\eta S(0,y_0) = 2$ with respect to $t_3$, we obtain $\langle \partial_{t_3} y_0, \nabla_y S \rangle = 0$. This suggests taking the inner product of Eq.~\eqref{eq:aux-1} with $\nabla_y S$. Recall that we denote $u_1 = \langle y_1, \nabla_y S \rangle$. We obtain
            \begin{equation*}
                \partial_{t_2} u_1 = - \eta \left[\left\Vert \nabla_y S \right\Vert^2 \frac{a_1^2}{2} + \left\langle \nabla_y h, \nabla_y S  \right\rangle\right] \, .
            \end{equation*}
            This finishes the derivation of Eqs.~\eqref{eq:self-stab-codim-one}.

            As discussed below the result, the self-stabilization system~\eqref{eq:self-stab-codim-one} implies that $a_1$ is periodic in $t_2$ (for fixed $t_3$). Thus in the right hand side of Eq.~\eqref{eq:aux-1}, all quantities are independent of $t_2$ except for $a_1$ which is periodic in $t_2$. 

            By Lemma~\ref{lem:periodic}, to avoid $y_1$ developing a secular term in $t_2$, we must have 
            \begin{equation*}
                \partial_{t_3} y_0 = - \eta \left[\nabla_y h + \frac{\langle a_1^2 \rangle_{t_2}}{2}  \nabla_y S\right] \, .
            \end{equation*}
            \end{itemize}
            This completes the derivation of item \ref{it:codim-one-1} by taking $\sigma^2 = \langle a_1^2 \rangle_{t_2}$ if $\eta S =2$ and $\sigma^2 = 0$ if $\eta S < 2$, and thus completes the derivation of the result.

\section{Derivation of Result~\ref{result:general}}
\label{sec:derivation-general}

    As before, we identify terms in the expansion of
    \begin{align}
        \Delta_k x &= -\eta \nabla_x f(x,y) \, , \label{eq:x-general} \\
        \Delta_k y &= -\eta \nabla_y f(x,y) \, . \label{eq:y-general}
    \end{align}
    The expansion of the right hand side is the same as in Eq.~\eqref{eq:aux-2}, except for a few modifications due to the fact that $X$ is multidimensional: 
    \begin{itemize}
        \item $\partial_x$ is replaced by $\nabla_x$,
        \item $S(0,y) = \partial^2_{x} g(0,y) \in \R$ is replaced by $S(0,y) = \nabla^2_{x} g(0,y) \in \Sym_+ X$,
        \item $\partial_x S(0,y) \in \R$ is replaced by $D_x S(0,y) : X \to \Sym X$,
        \item $\nabla_y S(0,y) \in Y$ is replaced by $D_y S(0,y) : Y \to \Sym X$. 
    \end{itemize} 
    We obtain:
    \begin{align*}
        -\eta \nabla_x f(x,y) &= - \varepsilon^{1/2} \eta S(0,y_0) x_1 \\
        &\quad- \varepsilon \eta \left[S(0,y_0) x_2 + \frac{1}{2} D_x S(0,y_0)[x_1] x_1 + D_y S(0,y_0)[y_1] x_1 + \nabla_x h(0,y_0)\right] \\
        &\quad+ o(\varepsilon) \, , \\
        -\eta \nabla_y f(x,y) &= - \varepsilon \eta \left[\frac{1}{2} D_y S(0,y_0)^*[x_1 x_1^\top] + \nabla_y h(0,y_0)\right] + o(\varepsilon) \, .
    \end{align*}
    In the derivations that follow, we do not write explicitly the evaluations at $(0,y_0)$ anymore. We can now proceed to the identification of terms in the expansion of Eqs.~\eqref{eq:x-general}--\eqref{eq:y-general}. 

    \paragraph{Order $1$.} As before, this order gives $y_0 = y_0(t_2, t_3)$.

    \paragraph{Order $\varepsilon^{1/2}$.} The $x$ equation \eqref{eq:x-general} gives
    \begin{equation*}
        \Delta_{t_1} x_1 = - \eta S x_1 \, .
    \end{equation*}
    We distinguish directions depending on whether they are critical. Recall that $C = C(t_3) = \ker(2 \Id_X - \eta S)$ denotes the critical subspace. We use a spectral decomposition of $\eta S$. By assumption from Sec.~\ref{sec:setting}, all eigenvalues are positive. The eigenvectors associated the eigenvalue $2$ form a basis of $C$, while the other eigenvectors are associated with eigenvalues in $(0,2)$ and form a basis of $C^\perp$. 
    
    Denoting $x_1^\parallel = P_C x_1$ and $x_1^\perp = P_{C^\perp} x_1$, this gives $x_1^\perp \xrightarrow[t_1 \to \infty]{} 0$, with $\sum_{t_1=0}^{+\infty} \|x_1^\perp(t_1, t_2, t_3)\| < +\infty$ and $x_1^\parallel = (-1)^{t_1} a_1$ for some $a_1 = a_1(t_2, t_3) \in C$.

    As before, the $y$ equation \eqref{eq:y-general} gives $y_1 = y_1(t_2, t_3)$ and $y_0 = y_0(t_3)$.

\paragraph{Order $\varepsilon$.} The $x$ equation \eqref{eq:x-general} gives
    \begin{equation*}
        \Delta_{t_1} x_2 + \partial_{t_2} x_1 + \partial_{t_2} \Delta_{t_1} x_1 = - \eta \left[S x_2 + \frac{1}{2} D_x S[x_1] x_1 + D_y S[y_1] x_1 + \nabla_x h\right] \, .
    \end{equation*}
    We focus on the critical directions, i.e., we apply $P_C$ to both sides of the equation. Denoting $x_2^\parallel = P_C x_2$, we obtain
    \begin{equation*}
        \Delta_{t_1} x_2^\parallel + \partial_{t_2} x_1^\parallel + \partial_{t_2} \Delta_{t_1} x_1^\parallel = - \eta \left[P_C S 
        x_2 + \frac{1}{2} P_C D_x S[x_1] x_1 + P_C D_y S[y_1] x_1 + P_C \nabla_x h\right] \, .
    \end{equation*}
Recall that $C$ is the eigenspace of $\eta S$ associated with the eigenvalue $2$, thus $\eta P_C S = 2 P_C$. Moreover, we decompose $x_1 = \iota_C x_1^\parallel + \iota_{C^\perp} x_1^\perp = (-1)^{t_1} \iota_C a_1 + \iota_{C^\perp} x_1^\perp$. This gives
    \begin{align*}
        \Delta_{t_1} x_2^\parallel + 2 x_2^\parallel &= (-1)^{t_1} \partial_{t_2} a_1 - \eta \bigg[\frac{1}{2} P_C D_x S[\iota_Ca_1] \iota_C a_1 + \frac{1}{2}(-1)^{t_1} P_C D_x S[\iota_{C^\perp} x_1^\perp] \iota_Ca_1 \\
        &\hspace{8em}+ \frac{1}{2}(-1)^{t_1} P_C D_x S[\iota_C a_1] \iota_{C^\perp} x_1^\perp + \frac{1}{2}P_C D_x S[\iota_{C^\perp} x_1^\perp] \iota_{C^\perp} x_1^\perp \\
        &\hspace{8em}+ (-1)^{t_1} P_C D_y S[y_1] \iota_Ca_1 + P_C D_y S[y_1] \iota_{C^\perp} x_1^\perp + P_C \nabla_x h\bigg] \, .
    \end{align*}
    The right hand side depends on $t_1$ only through the $(-1)^{t_1}$ terms and $x_1^\perp$ terms, but the latter are vanishing as $t_1 \to \infty$, with $\sum_{t_1 = 0}^{+\infty} \Vert x_1^\perp \Vert < +\infty$. This suggests studying the following generalization of Lemma~\ref{lem:secular}.

    \begin{lem}
        \label{lem:secular-general}
        Let $\alpha, \beta \in \R$ and $\gamma = \gamma(t_1) \in \R$ such that $\sum_{t_1=0}^{+\infty} |\gamma(t_1)| < +\infty$. The solutions $\varphi = \varphi(t_1)$ of the difference equation
        \begin{equation*}
            \Delta_{t_1} \varphi + 2 \varphi = \alpha + \beta (-1)^{t_1} + \gamma(t_1)
        \end{equation*}
        are 
        \begin{equation*}
            \varphi = \frac{\alpha}{2} + a (-1)^{t_1} - \beta t_1 (-1)^{t_1} - \sum_{s=0}^{t_1-1} (-1)^{t_1+s} \gamma(s) \, , \qquad a \in \R \, .
        \end{equation*} 
        In particular, there exists a bounded solution if and only if $\beta = 0$.
    \end{lem}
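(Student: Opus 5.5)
The plan is to solve the difference equation explicitly by reducing it to a telescoping sum, since it is linear, first order, with constant coefficients. Writing $\Delta_{t_1}\varphi(t_1) = \varphi(t_1+1)-\varphi(t_1)$, the equation reads
\begin{equation*}
    \varphi(t_1+1) = -\varphi(t_1) + \alpha + \beta(-1)^{t_1} + \gamma(t_1) \, .
\end{equation*}
The associated homogeneous equation $\varphi(t_1+1) = -\varphi(t_1)$ has solutions $a(-1)^{t_1}$, $a\in\R$. By linearity, it then suffices to find one particular solution for each of the three forcing terms and add them.

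For the constant forcing $\alpha$, the constant $\alpha/2$ works. For the resonant forcing $\beta(-1)^{t_1}$, we try $\varphi = c\,t_1(-1)^{t_1}$. Then $\varphi(t_1+1)+\varphi(t_1) = (-1)^{t_1}\big(-c(t_1+1) + c t_1\big) = -c(-1)^{t_1}$, so $c=-\beta$. For the summable forcing $\gamma$, we multiply the equation by $(-1)^{t_1+1}$. Setting $\psi(t_1) = (-1)^{t_1}\varphi(t_1)$ gives $\psi(t_1+1)-\psi(t_1) = (-1)^{t_1+1}\gamma(t_1)$, which telescopes to
\begin{equation*}
    \psi(t_1) = \psi(0) - \sum_{s=0}^{t_1-1}(-1)^s\gamma(s) \, .
\end{equation*}
Multiplying back by $(-1)^{t_1}$ yields the stated term $-\sum_{s=0}^{t_1-1}(-1)^{t_1+s}\gamma(s)$. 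A cleaner route is to apply this same substitution $\psi=(-1)^{t_1}\varphi$ to the full equation at once. It turns the equation into a pure summation and recovers all four terms of the formula simultaneously: the constant $a=\psi(0)-\alpha/2$, the alternating-sum contribution of $\alpha$ (which, after multiplying back by $(-1)^{t_1}$, gives $\alpha/2$ minus an alternating term absorbed into $a$), the linear term $-\beta t_1$, and the $\gamma$-sum. This also proves that every solution has the stated form, because a first-order recursion is determined by its initial value $\varphi(0)$, and $a$ ranges over $\R$.

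For the boundedness claim, note that $|\sum_{s<t_1}(-1)^{t_1+s}\gamma(s)| \leq \sum_s|\gamma(s)| < \infty$, so this term is bounded uniformly in $t_1$. The terms $\alpha/2$ and $a(-1)^{t_1}$ are bounded as well. Hence $\varphi$ is bounded if and only if $\beta t_1(-1)^{t_1}$ is bounded, which holds if and only if $\beta=0$. The summability of $\gamma$ is used only at this final step. I do not expect a real obstacle here. The only care needed is the bookkeeping of the sign convention $(-1)^{t_1+s}$ and of how the alternating part of the $\alpha$-sum is absorbed into the free constant $a$.
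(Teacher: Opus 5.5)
Your proposal is correct, and its ``cleaner route'' is exactly the paper's argument: the paper sets $\varphi = A(-1)^{t_1}$ (your $\psi = (-1)^{t_1}\varphi$), reduces the equation to $\Delta_{t_1} A = -\alpha(-1)^{t_1} - \beta - (-1)^{t_1}\gamma(t_1)$, and reads off the discrete primitive. Your explicit arguments go slightly beyond the paper, which just writes ``This gives the result'': you note that every solution has this form because it is determined by $\varphi(0)$, and you derive the boundedness dichotomy from $\sum_s |\gamma(s)| < \infty$.
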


    \begin{proof}
        As $(-1)^{t_1}$ is solution to the homogeneous equation, we seek solutions of the form $\varphi = A(-1)^{t_1}$ for some $A = A(t_1)$. We compute
        \begin{align*}
            \Delta_{t_1} \varphi + 2 \varphi &= \varphi(t_1 + 1) - \varphi(t_1) + 2 \varphi(t_1) = \varphi(t_1 + 1) + \varphi(t_1) \\
            &= A(t_1 + 1)(-1)^{t_1 + 1} + A(t_1)(-1)^{t_1} = -(-1)^{t_1} \Delta_{t_1} A \, .
        \end{align*}
        The function $\varphi = A(-1)^{t_1}$ solves the difference equation if and only if $A$ solves
        \begin{equation*}
            \Delta_{t_1} A = - \alpha (-1)^{t_1} - \beta - (-1)^{t_1} \gamma(t_1) \, .
        \end{equation*}
        The discrete primitives are 
        \begin{equation*}
            A = a + \frac{\alpha}{2} (-1)^{t_1} - \beta t_1 - \sum_{s=0}^{t_1-1} (-1)^s \gamma(s) \, , \qquad a \in \R \, .
        \end{equation*}
        This gives the result.
    \end{proof}
    Applying Lemma~\ref{lem:secular-general} to the equation above, we obtain that to avoid a secular term, we must have 
    \begin{equation*}
        \partial_{t_2} a_1 = \eta P_C D_y S[y_1] \iota_C a_1 = \eta \Psi[y_1] a_1 = \eta U_1 a_1 \, .
    \end{equation*}

    We now turn to the $y$ equation \eqref{eq:y-general}. It gives
    \begin{equation*}
        \Delta_{t_1} y_2 + \partial_{t_2} y_1 + \partial_{t_3} y_0 = - \eta \left[\frac{1}{2} (D_y S)^*[x_1 x_1^\top] + \nabla_y h\right] \, .
    \end{equation*}
    Again, we decompose $x_1 = (-1)^{t_1} \iota_C a_1 + \iota_{C^\perp} x_1^\perp$. 
    \begin{align*}
        &\Delta_{t_1} y_2 + \partial_{t_2} y_1 + \partial_{t_3} y_0 \\
        &\qquad= - \eta \bigg[\frac{1}{2} (D_y S)^*[\iota_C a_1 a_1^\top P_C] + \frac{(-1)^{t_1}}{2} (D_y S)^*\left[\iota_C a_1 x_1^{\perp \top} P_{C^\perp} + \iota_{C^\perp} x_1^\perp a_1^\top P_C\right] \\
        &\qquad\qquad\qquad+ \frac{1}{2} (D_y S)^*[\iota_{C^\perp} x_1^\perp x_1^{\perp \top} P_{C^\perp}] + \nabla_y h\bigg] \, .
    \end{align*}
    Here, the right hand side depends on $t_1$ only through the $(-1)^{t_1}$ terms and $x_1^\perp$ terms, but the latter are vanishing as $t_1 \to \infty$. To avoid a secular term, we must have
    \begin{equation*}
        \partial_{t_2} y_1 + \partial_{t_3} y_0 = - \eta \left[\frac{1}{2} (D_y S)^*[\iota_C a_1 a_1^\top P_C] + \nabla_y h\right] \, .
    \end{equation*}
    Note that for $M \in \Sym C$, $\Psi^*[M] = (D_y S)^*[\iota_CMP_C]$. Thus we have 
    \begin{equation}
        \label{eq:aux-3}
        \partial_{t_2} y_1 + \partial_{t_3} y_0 = - \eta \left[\frac{1}{2} \Psi^*[a_1 a_1^\top] + \nabla_y h\right] \, .
    \end{equation}
    Here, we want to use that the evolution of $S(0,y_0)$ is constrained in the critical subspace~$C$. We use the following lemma. 

    \begin{lem}
        Let $t \mapsto M(t) \succcurlyeq 0$ be a smooth curve of positive semidefinite matrices, $P = P(t)$ denote the orthogonal projection onto $\ker M(t)$, and $\iota = \iota(t) = P^*$ the corresponding inclusion operator. Then, for all $t$, we have $P \partial_t M \iota = 0$.
    \end{lem}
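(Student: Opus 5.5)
The plan is to exploit that a positive semidefinite curve attains its minimum value $0$ on its kernel, so any first-order variation within the kernel must vanish. Fix $t$ and a unit vector $v \in \ker M(t)$, so that $\iota v = v$ viewed in the ambient space. I will consider the scalar function $\phi(s) = \langle v, M(s) v\rangle$ for $s$ near $t$. Since $M(s) \succcurlyeq 0$ for all $s$, we have $\phi(s) \geq 0$. Also $\phi(t) = \langle v, M(t) v \rangle = 0$ because $v \in \ker M(t)$. Thus $\phi$ is a smooth nonnegative function attaining its minimum at the interior point $s = t$, so $\phi'(t) = 0$. This gives $\langle v, \partial_t M(t)\, v\rangle = 0$ for every $v \in \ker M(t)$.

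Next I upgrade the vanishing of this quadratic form to the vanishing of the full operator $P\,\partial_t M\,\iota$ on $\ker M(t)$. This operator is symmetric on $\ker M(t)$, since $\partial_t M$ is symmetric (being a derivative of symmetric matrices) and $P = \iota^*$. A symmetric operator whose quadratic form vanishes identically is zero, which follows by polarization: for $u, w \in \ker M(t)$,
\begin{equation*}
\langle u, \partial_t M\, w\rangle = \tfrac{1}{4}\left(\langle u+w, \partial_t M (u+w)\rangle - \langle u-w, \partial_t M (u-w)\rangle\right) = 0.
\end{equation*}
Hence $P\,\partial_t M\,\iota = 0$.

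The only subtle point is that $P$ and $\iota$ depend on $t$ and the kernel dimension may jump. However, the argument is entirely pointwise in $t$: I freeze $v$ and never differentiate $P$, so no regularity of $t \mapsto \ker M(t)$ is needed. I expect no real obstacle beyond noting this.

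In the application, the lemma will be used with $M = 2\Id_X - \eta S(0,y_0(t_3))$, whose kernel is $C$. This yields $\Psi[\partial_{t_3} y_0] = 0$, and projecting Eq.~\eqref{eq:aux-3} through $\Psi$ then gives the $U_1$ equation of \eqref{eq:self-stab-general}.
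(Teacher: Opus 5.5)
Your proof is correct and matches the paper's argument: the nonnegative function $s \mapsto v^\top M(s) v$ vanishes at $s=t$, so $v^\top \partial_t M(t)\, v = 0$ for every $v \in \ker M(t)$. Polarization then gives $P\,\partial_t M\,\iota = 0$; you use the $u \pm w$ form where the paper uses the $a+b$ form, which makes no difference.
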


    \begin{proof}
        Let $a \in \ker M(t)$. Then $a^\top M(t) a = 0$. Moreover, for all $s$, we have $a^\top M(s) a \geq 0$. We thus have, for all $a \in \ker M(t)$, $\partial_t (a^\top M(t) a) = a^\top \partial_t M(t) a = 0$. 

        Now, consider $a, b \in \ker M(t)$. Then, by polarization ($\partial_t M(t)$ is symmetric),
        \begin{equation*}
            2 a^\top \partial_t M(t) b = (a + b)^\top \partial_t M(t) (a + b) - a^\top \partial_t M(t) a - b^\top \partial_t M(t) b = 0 \, .
        \end{equation*}
        Thus $P \partial_t M \iota = 0$.
    \end{proof}

    Here, we apply the lemma to $M(t_3) = 2 \Id_X - \eta S(0,y_0(t_3))$. We obtain
    \begin{align*}
        0 = P_C \partial_{t_3} S \iota_C = P_C D_y S[\partial_{t_3} y_0] \iota_C = \Psi[\partial_{t_3} y_0] \, .
    \end{align*}
    This suggests evaluating $\Psi$ at Eq.~\eqref{eq:aux-3}. We obtain 
    \begin{equation*}
        \partial_{t_2} U_1 = - \eta \left[\frac{1}{2} \Psi\Psi^*[a_1 a_1^\top] + \Psi[\nabla_y h]\right] \, .
    \end{equation*}
    This completes the derivation of the self-stabilization system~\eqref{eq:self-stab-general}. 

    We do not know whether the self-stabilization system induces periodic solutions in $t_2$ when more than one eigenvalue is on the edge of stability. To give a meaning to the average $\langle . \rangle_{t_2}$, we take the limit $T_2 \to \infty$ of the average over $[0, T_2]$ (if the limit exists). For instance, to avoid a secular term, $y_1$ is uniformly bounded in $t_2$, thus we have 
    \begin{equation*}
        \langle \partial_{t_2} y_1 \rangle_{t_2} = \lim_{T_2 \to \infty} \frac{1}{T_2} \int_0^{T_2} \partial_{t_2} y_1  \diff t_2 = \lim_{T_2 \to \infty} \frac{y_1(T_2, t_3) - y_1(0, t_3)}{T_2} = 0 \, .
    \end{equation*}
    Taking the average of Eq.~\eqref{eq:aux-3} in $t_2$, we obtain
    \begin{equation*}
        \partial_{t_3} y_0 = - \eta \left[\nabla_y h + \frac{1}{2} \langle \Psi^*[a_1 a_1^\top] \rangle_{t_2}\right] \, .
    \end{equation*}
    Provided that the average $\langle a_1 a_1^\top \rangle_{t_2}$ is well-defined, we have 
    \begin{equation*}
        \langle \Psi^*[a_1 a_1^\top] \rangle_{t_2} = \Psi^*[\langle a_1 a_1^\top \rangle_{t_2}] = (D_y S)^*[\iota_C \langle a_1 a_1^\top \rangle_{t_2} P_C] \, , 
    \end{equation*}
    thus the result holds with $\Sigma = \iota_C \langle a_1 a_1^\top \rangle_{t_2} P_C$, which is complementary to $2 \Id_X - \eta S$ (by definition of $C = \ker(2 \Id_X - \eta S)$). If we do not make the assumption that the average $\langle a_1 a_1^\top \rangle_{t_2}$ is well-defined, we use that $a_1$ is bounded in $t_2$ (it is non-secular) and thus the averages $\frac{1}{T_2} \int_0^{T_2} a_1 a_1^\top \diff t_2$ have an accumulation point $\widetilde{\Sigma} \in \Sym_+ C$ as $T_2 \to \infty$. We then have $\langle \Psi^*[a_1 a_1^\top] \rangle_{t_2} = \Psi^*[\widetilde{\Sigma}]$, and we can define $\Sigma = \iota_C \widetilde{\Sigma} P_C \in \Sym_+ X$. This gives the derivation of item \ref{it:general-1} and completes the derivation of Result~\ref{result:general}.

\section{Derivations of Corollary~\ref{coro:general}}
    \label{sec:derivation-coro-general}

        We compute
        \begin{align*}
            f(x,y) &= g(x,y) + \varepsilon h(x,y) \\
            &= g(\varepsilon^{1/2} x_1 + \varepsilon x_2 + o(\varepsilon), y_0 + \varepsilon^{1/2} y_1+ \varepsilon y_2 + o(\varepsilon)) + \varepsilon h(o(1), y_0 + o(1)) \\
            &= g(0,y_0) + \left\langle \nabla_x g(0,y_0) ,\varepsilon^{1/2} x_1 + \varepsilon x_2 \right\rangle + \left\langle \nabla_y g(0,y_0) , \varepsilon^{1/2} y_1 + \varepsilon y_2 \right\rangle \\
            &\qquad+ \frac{1}{2} \left\langle \varepsilon^{1/2} x_1, \nabla^2_x g(0,y_0) \varepsilon^{1/2} x_1 \right\rangle + \left\langle \varepsilon^{1/2} x_1, \nabla_x \nabla_y g(0,y_0) \varepsilon^{1/2} y_1 \right\rangle \\
            &\qquad+ \frac{1}{2} \left\langle \varepsilon^{1/2} y_1, \nabla^2_y g(0,y_0) \varepsilon^{1/2} y_1 \right\rangle + \varepsilon h(0,y_0) + o(\varepsilon) \, . 
        \end{align*}
        As in the derivations above, we have $\nabla_x g(0,y_0) = 0$, $\nabla_y g(0,y_0) = 0$, $\nabla^2_{x} g(0,y_0) = S(0,y_0)$, $\nabla_x \nabla_y g(0,y_0) = 0$, and $\nabla^2_{y} g(0,y_0) = 0$. Moreover, we have $g(0,y_0) = \min g$. We obtain
        \begin{align*}
            &f(x,y) = \min g + \varepsilon f_2 + o(\varepsilon) \, , 
            &&f_2 = h(0,y_0) + \frac{1}{2} \langle x_1, S(0,y_0) x_1 \rangle \, .
        \end{align*}
        Here, we decompose $x_1 = (-1)^{t_1} \iota_C a_1 + \iota_{C^\perp} x_1^\perp$. As $x_1^\perp \to 0$ as $t_1 \to \infty$, and $C$ is the eigenspace of $S(0,y_0)$ associated to the eigenvalue $2/\eta$, we have
        \begin{align*}
            f_2 &\xrightarrow[t_1 \to \infty]{} h(0, y_0) + \frac{1}{2} \langle \iota_Ca_1, S(0,y_0) \iota_C a_1 \rangle =  h(0, y_0) + \frac{\Vert a_1 \Vert^2}{\eta} \, .
        \end{align*}

\end{document}